\let\old@float@makebox\float@makebox
\renewcommand{\float@makebox}[1]{%
  \color@vbox\normalcolor
    \old@float@makebox{#1}%
  \color@endbox}
\crefname{equation}{}{}
\crefname{lemma}{Lemma}{Lemmas}
\crefname{assumption}{Assumption}{Assumptions}
\newtheorem{assumption}{Assumption}
\newtheorem{theorem}{Theorem}
\newtheorem{lemma}{Lemma}
\newtheorem{corollary}{Corollary}
\newtheorem{definition}{Definition}
\newcommand{\eps}{\epsilon}
\def\balign#1\ealign{\begin{align}#1\end{align}}
\def\baligns#1\ealigns{\begin{align*}#1\end{align*}}
\def\balignat#1\ealign{\begin{alignat}#1\end{alignat}}
\def\balignats#1\ealigns{\begin{alignat*}#1\end{alignat*}}
\def\bitemize#1\eitemize{\begin{itemize}#1\end{itemize}}
\def\benumerate#1\eenumerate{\begin{enumerate}#1\end{enumerate}}
\newenvironment{talign*}
 {\csname align*\endcsname}
 {\endalign}
\newenvironment{talign}
 {\csname align\endcsname}
 {\endalign}
\def\balignst#1\ealignst{\begin{talign*}#1\end{talign*}}
\def\balignt#1\ealignt{\begin{talign}#1\end{talign}}
\newcommand{\qtext}[1]{\quad\text{#1}\quad} 
\newcommand{\stext}[1]{\ \text{#1}\ }
\let\originalleft\left
\let\originalright\right
\renewcommand{\left}{\mathopen{}\mathclose\bgroup\originalleft}
\renewcommand{\right}{\aftergroup\egroup\originalright}
\def\tinycitep*#1{{\tiny\citep*{#1}}}
\def\tinycitealt*#1{{\tiny\citealt*{#1}}}
\def\tinycite*#1{{\tiny\cite*{#1}}}
\def\smallcitep*#1{{\scriptsize\citep*{#1}}}
\def\smallcitealt*#1{{\scriptsize\citealt*{#1}}}
\def\smallcite*#1{{\scriptsize\cite*{#1}}}
\def\mbb#1{\mathbb{#1}}
\def\reals{\mathbb{R}} %
\def\R{\mathbb{R}}
\def\<{\left\langle} %
\def\>{\right\rangle}
\def\iff{\Leftrightarrow}
\def\defeq{\triangleq} %
\newcommand{\textfrac}[2]{{\textstyle\frac{#1}{#2}}}
\def\norm#1{\left\|{#1}\right\|} %
\newcommand{\twonorm}[1]{\norm{#1}_2} %
\newcommand{\opnorm}[1]{\norm{#1}_{\mathrm{op}}} %
\newcommand{\inner}[2]{\langle{#1},{#2}\rangle} %
\def\indic#1{\mbb{I}\left[{#1}\right]} %
\def\E{\mbb{E}} %
\newcommand{\grad}{\nabla} %
\def\KL#1#2{\textnormal{KL}({#1}\Vert{#2})}
\newcommand{\iid}{\textrm{i.i.d.}\@\xspace}
\newcommand{\dist}{\sim}
\newcommand{\distiid}{\overset{\textrm{\tiny\iid}}{\dist}}
\newenvironment{proofof}[1]{\noindent\textbf{Proof of {#1}}\ }{\hfill$\blacksquare$\\}
\newcommand{\nosum}{\sum\nolimits}
\newcommand{\noprod}{\prod\nolimits}
\newcommand{\couplings}[1][\mu,\nu]{\Gamma(#1)}
\DeclareMathOperator{\Lip}{Lip}
\newcommand{\xstar}{x^{\star}}
\newcommand{\SVGD}[2][\mu_0]{\textup{SVGD}({#1},#2)}
    \newcommand{\optsum}{s_n^\star}
\DeclarePairedDelimiter\ceil{\lceil}{\rceil}
\newcommand{\pset}[0]{\mathcal{P}} %
\newcommand{\vlang}[1]{\mathcal{A}_P{#1}} %
\newcommand{\vlangarg}[2]{(\vlang{#1})({#2})} %
\newcommand{\langevin}[1]{\mathcal{T}_P{#1}} %
\newcommand{\lang}[1]{\langevin{#1}} %
\newcommand{\langarg}[2]{(\langevin{#1})({#2})} %
\newcommand{\wass}[1][1]{W_{#1}} %
\def\KSD#1#2{\textnormal{KSD}_P({#1},{#2})}
\newcommand{\Qinf}[1][r]{\mu^{\infty}_{#1}}
\newcommand{\Qn}[1][r]{\mu^n_{#1}}
\newcommand{\svgd}[1][\eps]{\Phi_{#1}} %
\newcommand{\svgdt}[1][\mu,\eps]{T_{#1}} %
\newcommand{\score}[1][p]{s_{#1}} %
\newcommand{\gradd}[1]{\grad\cdot{#1}} %
\newcommand{\cH}{\mathcal{H}}
\def\defeq{\triangleq}
\def\norm#1{\left\|{#1}\right\|}
\def\KL#1#2{\textnormal{KL}({#1}\Vert{#2})}
\newcommand{\mom}[1][P,\xstar]{m_{#1}} %
\newcommand{\nref}[1]{\texorpdfstring{\ref{#1}: \nameref*{#1}}{\ref{#1}}}
\newcommand{\pref}[2]{Proof of {#1}~\nref{#2}}
\title{A Finite-Particle Convergence Rate for\\ Stein Variational Gradient Descent}
\author{%
  Jiaxin Shi\thanks{Part of this work was done at Microsoft Research New England.} \\
  Stanford University\\
  Stanford, CA 94305 \\
  \texttt{jiaxins@stanford.edu} \\
  \And
  Lester Mackey \\
  Microsoft Research New England\\
  Cambridge, MA 02474 \\
  \texttt{lmackey@microsoft.com} \\
}
\begin{document}

\maketitle

\begin{abstract}
We provide the first finite-particle convergence rate for Stein variational gradient descent (SVGD), a popular algorithm for approximating a probability distribution with a collection of particles.
Specifically, 
whenever the target distribution 
is sub-Gaussian with a Lipschitz score, 
SVGD with $n$ particles and an appropriate step size sequence drives the kernel Stein discrepancy to zero at an order ${1/}{\sqrt{\log\log n}}$ rate.
We suspect that the dependence on $n$ can be improved, and we hope that our explicit, non-asymptotic proof strategy will serve as a template for future refinements.
\end{abstract}

\section{Introduction}

Stein variational gradient descent \citep[SVGD,][]{liu2016stein} is an algorithm for approximating a target probability distribution $P$ on $\reals^d$ with a collection of $n$ particles.
Given an initial particle approximation $\Qn[0] = \frac{1}{n}\sum_{i=1}^n \delta_{x_i}$ with locations $x_i \in \R^d$, 
SVGD (\cref{alg:n-particle-svgd}) iteratively evolves the particle locations to provide a more faithful approximation of the target $P$ by performing optimization in the space of probability measures. 
SVGD has demonstrated encouraging results for a wide variety of inferential tasks, including approximate inference~\citep{liu2016stein,zhuo2018message,wang2018stein}, generative modeling~\citep{wang2016learning,jaini2021learning}, and reinforcement learning~\citep{haarnoja2017reinforcement,liu2017policy}. 
\begin{algorithm}[ht]
\caption{$n$-particle Stein Variational Gradient Descent \citep{liu2016stein}: $\SVGD[{\Qn[0]}]{r}$}
\label{alg:n-particle-svgd}
\begin{algorithmic}
\setlength{\itemindent}{-10pt} %
\setlength{\algorithmicindent}{0pt}
\STATE {\bf Input:} Target $P$ with density $p$, kernel $k$, step sizes $(\eps_{s})_{s\geq 0}$, particle approximation $\Qn[0] = \frac{1}{n}\sum_{i=1}^n \delta_{x_i}$, rounds $r$
\FOR{$s = 0, \cdots, r-1$}
\STATE $x_i \leftarrow x_i + \eps_s \frac{1}{n}\sum_{j=1}^n k(x_j, x_i)\grad \log p(x_j) + \grad_{x} k(x_j, x_i)$ for $i=1,\dots, n$.
\ENDFOR
\STATE {\bf Output:} Updated approximation $\Qn[r] = \frac{1}{n}\sum_{i=1}^n \delta_{x_i}$  of the target $P$ 
\vspace{-.19\baselineskip}
\end{algorithmic}
\end{algorithm}

\begin{algorithm}[ht]
\caption{Generic Stein Variational Gradient Descent \citep{liu2016stein}: $\SVGD{r}$}
\label{alg:svgd}
\begin{algorithmic}
\setlength{\itemindent}{-10pt} %
\setlength{\algorithmicindent}{0pt}
\STATE {\bf Input:} Target $P$ with density $p$, kernel $k$, step sizes $(\eps_{s})_{s\geq 0}$, approximating measure $\mu_0$, rounds $r$
\FOR{$s = 0, \cdots, r-1$}
\STATE Let $\mu_{s+1}$ be the distribution of $X^s + \eps_s \int k(x, X^s)\grad \log p(x) + \grad_{x} k(x, X^s)d\mu_s(x)$ for $X^s\sim \mu_s$.
\ENDFOR
\STATE {\bf Output:} Updated approximation $\mu_r$  of the target $P$ 
\vspace{-.19\baselineskip}
\end{algorithmic}
\end{algorithm}

Despite the popularity of SVGD, 
relatively little is known about its approximation quality. 
A first analysis by \citet[Thm.~3.3]{liu2017stein} showed that \emph{continuous SVGD}---that is, \cref{alg:svgd} initialized with a continuous distribution $\Qinf[0]$ in place of the discrete particle approximation $\Qn[0]$---converges to $P$  in {kernel Stein discrepancy} \citep[KSD,][]{chwialkowski2016kernel,liu2016kernelized,gorham2017measuring}. 
KSD convergence is also known to imply weak convergence \citep{gorham2017measuring,chen2018stein,HugginsMa2018,barp2022targeted} and Wasserstein convergence \citep{kanagawa2022controlling} under various conditions on the target $P$ and the SVGD kernel $k$.
Follow-up work by \citet{korba2020non,salim2021complexity,sun2022convergence} sharpened the result of \citeauthor{liu2017stein} with path-independent constants, weaker smoothness conditions, and explicit rates of convergence.
In addition, \citet{duncan2019geometry} analyzed the continuous-time limit of continuous SVGD to provide conditions for exponential convergence. 
However, each of these analyses applies only to continuous SVGD and not to the finite-particle algorithm used in practice.

To bridge this gap, 
\citet[Thm.~3.2]{liu2017stein} showed that $n$-particle SVGD converges to continuous SVGD in bounded-Lipschitz distance but only under boundedness assumptions violated by most applications of SVGD.
To provide a more broadly applicable proof of convergence, 
\citet[Thm.~7]{gorham2020stochastic} showed that $n$-particle SVGD converges to continuous SVGD in $1$-Wasserstein distance under 
assumptions commonly satisfied in SVGD applications.
However, both convergence results are asymptotic, providing neither explicit error bounds nor rates of convergence.
\citet[Prop.~7]{korba2020non} explicitly bounded the expected squared Wasserstein distance between $n$-particle and continuous SVGD but only under the assumption of bounded $\grad\log p$, an assumption that rules out all strongly log concave or dissipative distributions and all distributions for which the KSD is currently known to control weak convergence~\citep{gorham2017measuring,chen2018stein,HugginsMa2018,barp2022targeted}.  In addition, \citet{korba2020non} do not provide a unified bound for the convergence of $n$-particle SVGD to $P$ and ultimately conclude that ``the convergence rate for SVGD using [$\Qn[0]$] remains an open problem.''
The same open problem was underscored in the later work of \citet{salim2021complexity}, who write ``an important and difficult open problem in the analysis of SVGD is to characterize its complexity with a finite number of particles.'' 

In this work, we derive the first unified convergence bound for finite-particle SVGD to its target.
To achieve this, we first bound the $1$-Wasserstein discretization error between finite-particle and continuous SVGD under assumptions commonly satisfied in SVGD applications and compatible with KSD weak convergence control (see \cref{thm:svgd-discretization-error}).
We next bound KSD in terms of $1$-Wasserstein distance and SVGD moment growth to explicitly control  KSD discretization error in \cref{thm:ksd-wass}.
Finally, \cref{thm:main-svgd} combines our results with the established KSD analysis of continuous SVGD to arrive at an explicit KSD error bound for $n$-particle SVGD.

\section{Notation and Assumptions}
Throughout, we fix 
a nonnegative step size sequence $(\eps_s)_{s\geq 0}$, 
a target distribution $P$ in the set $\pset_1$ of probability measures on $\reals^d$ with integrable first moments,  and a \emph{reproducing kernel} $k$---a symmetric positive definite function mapping $\reals^d \times \reals^d$ to $\R$---with reproducing kernel Hilbert space (RKHS) $\cH$ and product RKHS $\cH^d \defeq \bigotimes_{i=1}^d \cH$~\citep{berlinet2011reproducing}.
We will use the terms ``kernel'' and ``reproducing kernel'' interchangeably. 
For all $\mu,\nu\in\pset_1$, we let $\couplings$ be the set of all \emph{couplings} of $\mu$ and $\nu$, i.e., joint probability distributions $\gamma$ over $\pset_1\times\pset_1$ with $\mu$ and $\nu$ as marginal distributions of the first and second variable respectively.
We further let $\mu\otimes\nu$ denote the independent coupling, the distribution of $(X,Z)$ when $X$ and $Z$ are drawn independently from $\mu$ and $\nu$ respectively.
With this notation in place we define the $1$-Wasserstein distance between $\mu, \nu\in\pset_1$ 
as $\wass(\mu, \nu) \defeq \inf_{\gamma\in\couplings} \E_{(X,Z)\sim\gamma}[\twonorm{Z - X}]$ and introduce the shorthand 
$\mom[\mu, \xstar] \defeq \mathbb{E}_{\mu}[\norm{\cdot - \xstar}]_2$ for each $\xstar\in\reals^d$,  
$\mom[\mu, P] \defeq \E_{(X,Z)\sim \mu\otimes P}[\twonorm{X - Z}]$, and $M_{\mu,P} \defeq \E_{(X,Z)\sim \mu\otimes P}[\twonorm{X - Z}^2]$. 
We further define the Kullback-Leibler (KL) divergence as $\KL{\mu}{\nu} \defeq \E_{\mu}[\log(\frac{d\mu}{d\nu})]$ when $\mu$ is absolutely continuous with respect to $\nu$ (denoted by $\mu \ll \nu$) and as $\infty$ otherwise.

Our analysis will make use of the following assumptions on the SVGD kernel and target distribution. 

\begin{assumption}[Lipschitz, mean-zero score function] \label{asp:score}
The target distribution $P\in\pset_1$ has a  differentiable density $p$ with an $L$-Lipschitz \emph{score function} $\score \defeq \grad \log p$, i.e., $\twonorm{\score(x) - \score(y)} \leq L\twonorm{x - y}$ for all $x,y\in\reals^d$. 
Moreover, $\E_P[s_p] = 0$ and $s_p({x^\star}) = 0$ for some ${x^\star}\in\reals^d$. 
\end{assumption}

\begin{assumption}[Bounded kernel derivatives] \label{asp:kernel}
The kernel $k$ is twice differentiable and
$\sup_{x, y\in\reals^d} \max(|k(x, y)|, \norm{\nabla_x k(x, y)}_2, \opnorm{\nabla_y\nabla_x k(x, y)}, \opnorm{\nabla_x^2 k(x, y)})
\leq \kappa_1^2$ for $\kappa_1 > 0$. 
Moreover, for all $i, j \in \{1, 2, \dots, d\}$, 
$\sup_{x \in \reals^d} \nabla_{y_i}\nabla_{y_j}\nabla_{x_i}\nabla_{x_j} k(x, y)|_{y=x} \leq \kappa_2^2$ for $\kappa_2 > 0$. 
\end{assumption}

\begin{assumption}[Decaying kernel derivatives] \label{asp:kernel-decay}
The kernel $k$ is differentiable and admits a $\gamma \in \reals$ such that, for all $x, y \in 
\reals^d$ satisfying $\twonorm{x - y} \geq 1$, 
\begin{talign}
    \twonorm{\nabla_x k(x, y)} \leq \gamma/\twonorm{x - y}. 
\end{talign}

\end{assumption}

\Cref{asp:score,,asp:kernel,,asp:kernel-decay} are both commonly invoked and commonly satisfied in the literature. 
For example, the Lipschitz score assumption is consistent with prior SVGD convergence analyses \citep{liu2017stein,korba2020non,salim2021complexity} and, by \citet[Prop. 1]{gorham2015measuring}, the score $\score$ is mean-zero under the mild integrability condition $\E_{X\sim P}[\twonorm{\score(X)}] <\infty$. %
The bounded and decaying derivative assumptions have also been made in prior analyses~\citep{korba2020non,gorham2017measuring} and, as we detail in \cref{app:kernel-assump}, are satisfied by the kernels most commonly used in SVGD, like the Gaussian and inverse multiquadric (IMQ) kernels. 
Notably, in these cases, the bounds $\kappa_1$ and $\kappa_2$ are independent of the dimension $d$. 

To leverage the continuous SVGD convergence rates of \citet{salim2021complexity}, 
we additionally assume that the target $P$ satisfies Talagrand’s $T_1$ inequality~\citep[Def.~22.1]{villani2009optimal}.
Remarkably, \citet[Thm.~22.10]{villani2009optimal} showed that \cref{asp:t1} is \emph{equivalent} to $P$ being a sub-Gaussian distribution.
Hence, this mild assumption holds for all strongly log concave $P$~\citep[Def.~2.9]{saumard2014log}, all $P$ satisfying the log Sobolev inequality~\citep[Thm.~22.17]{villani2009optimal}, and all \emph{distantly dissipative} $P$ for which KSD is known to control weak convergence~\citep[Def.~4]{gorham2017measuring}.

\begin{assumption}[Talagrand’s $T_1$ inequality~{\citep[Def.~22.1]{villani2009optimal}}] \label{asp:t1}
For $P \in \pset_1$, 
there exists $\lambda > 0$ such that,  for all $\mu\in\pset_1$, %
\begin{talign}
    \wass(\mu, P) \leq \sqrt{2{\KL{\mu}{P}/}{\lambda}}. 
\end{talign}
\end{assumption}

Finally we make use of the following notation specific to the SVGD algorithm.

\begin{definition}[Stein operator]
For any differentiable vector-valued function $g: \reals^d \to \reals^d$, the \emph{Langevin Stein operator~\citep{gorham2015measuring}} for $P$ satisfying \cref{asp:score} is defined by
\begin{talign}
\langarg{g}{x} \defeq \inner{\score(x)}{g(x)} + \gradd g(x) 
\qtext{for all} x\in\reals^d.
\end{talign}
\end{definition}

\begin{definition}[Vector-valued Stein operator]
For any differentiable function $h: \reals^d \to \reals$, the \emph{vector-valued Langevin Stein operator~\citep{liu2016kernelized}} for $P$ satisfying \cref{asp:score} is defined by
\begin{talign}
\vlangarg{h}{x} \defeq \score(x) h(x) + \grad h(x) 
\qtext{for all} x\in\reals^d.
\end{talign}
\end{definition}

\begin{definition}[SVGD transport map and pushforward]\label{def:svgd-transport}
The \emph{SVGD transport map~\citep{liu2016stein}}
for a target $P$ satisfying \cref{asp:score}, a kernel $k$ satisfying \cref{asp:kernel}, 
a step size $\eps \geq 0$, 
and an approximating distribution  $\mu\in\pset_1$
takes the form 
\begin{talign}\label{eq:svgdt}
\svgdt(x) \defeq x + \eps \E_{X \sim \mu}[\vlangarg{k(\cdot, x)}{X}]
\qtext{for all} x\in\reals^d.
\end{talign}
Moreover, the \emph{SVGD pushforward}  
$\svgd(\mu)$
represents the distribution of $\svgdt(X)$ when $X \sim \mu$.  
\end{definition}

\begin{definition}[Kernel Stein discrepancy]\label{def:ksd}
The \emph{Langevin kernel Stein discrepancy \citep[KSD,][]{chwialkowski2016kernel,liu2016kernelized,gorham2017measuring}} 
for $P$ satisfying \cref{asp:score},  $k$ satisfying \cref{asp:kernel}, and measures $\mu, \nu\in\pset_1$ is given by  %
\begin{talign}
\KSD{\mu}{\nu} &\defeq\sup_{\norm{g}_{\cH^d} \leq 1} \E_{\mu}[\langevin g] - \E_{\nu}[\langevin g].
\end{talign}
\end{definition}
Notably, the KSD so-defined is symmetric in its two arguments and satisfies the triangle inequality.
\begin{lemma}[KSD symmetry and triangle inequality]\label{ksd_triangle}
Under \cref{def:ksd}, for all $\mu, \nu,\pi\in\pset_1$, 
\begin{align}
\KSD{\mu}{\nu} = \KSD{\nu}{\mu}
\qtext{and}
\KSD{\mu}{\nu}
\leq \KSD{\mu}{\pi} + \KSD{\pi}{\nu}.
\end{align}
\end{lemma}
\begin{proof}
Fix any $\mu, \nu, \pi \in \pset_1$.
For symmetry, we note that $g\in \cH^d \iff f =-g\in \cH^d$, so 
\begin{align}
\KSD{\mu}{\pi} = \sup_{\norm{g}_{\cH^d} \leq 1} \E_\mu[\langevin g] - \E_\pi[\langevin g] = \sup_{\norm{f}_{\cH^d} \leq 1} \E_\pi[\langevin f] - \E_\mu[\langevin f] = \KSD{\pi}{\mu}. 
\end{align}
To establish the triangle inequality, we write
\begin{talign}
    \KSD{\mu}{\nu} &= \sup_{\norm{g}_{\cH^d} \leq 1} \E_\mu[\langevin g] - \E_\pi[\langevin g] + \E_\pi[\langevin g] - \E_\nu[\langevin g] \\
    &\leq \sup_{\norm{g}_{\cH^d} \leq 1} (\E_\mu[\langevin g] - \E_\pi[\langevin g]) + \sup_{\norm{h}_{\cH^d}\leq 1} (\E_\pi[\langevin h] - \E_\nu[\langevin h]) \\
    &\leq \KSD{\mu}{\pi} + \KSD{\pi}{\nu}.
\end{talign}
\end{proof}
\section{Wasserstein Discretization Error of SVGD}

Our first main result concerns the discretization error of SVGD and shows that $n$-particle SVGD remains close 
to its continuous SVGD limit whenever the step size sum $b_{r-1} = \sum_{s=0}^{r-1} \eps_s$ is sufficiently small.

\begin{theorem}[Wasserstein discretization error of SVGD] \label{thm:svgd-discretization-error}
Suppose \Cref{asp:kernel,,asp:score,asp:kernel-decay} hold.
For any $\Qn[0],\Qinf[0]\in\pset_1$, the outputs  $\Qn[r] = \SVGD[{\Qn[0]}]{r}$ and $\Qinf[r] = \SVGD[{\Qinf[0]}]{r}$ of \cref{alg:svgd} satisfy
\begin{align}
\log \left(\frac{\wass(\Qn[r], \Qinf[r])}{\wass(\Qn[0], \Qinf[0])}\right)
    \leq
 b_{r-1} (A + B \exp(C b_{r-1}))
\end{align}
for $b_{r-1} \defeq \sum_{s=0}^{r-1} \eps_s$, $A = (c_1 + c_2)(1 + \mom) $, $B = c_1 m_{\Qn[0], P} + c_2 m_{\Qinf[0], P}$, 
$C = \kappa_1^2 (3L + 1)$,
$c_1 = \max(\kappa_1^2 L, \kappa_1^2)$, and $c_2 = \kappa_1^2(L+1)+L \max(\gamma, \kappa_1^2)$.  
\end{theorem}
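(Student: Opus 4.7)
The plan is to establish a one-step Lipschitz recursion for $\wass(\Qn[s], \Qinf[s])$, control the relevant moments separately, and then accumulate via a discrete Gr\"onwall argument. Fix round $s$ and take an optimal $\wass$-coupling $(X, Z)$ of $(\Qn[s], \Qinf[s])$. By \cref{def:svgd-transport}, $(\svgdt[{\Qn[s], \eps_s}](X), \svgdt[{\Qinf[s], \eps_s}](Z))$ is a coupling of $(\Qn[s+1], \Qinf[s+1])$, so, writing $f_\mu(x) \defeq \E_{X' \sim \mu}[\vlangarg{k(\cdot, x)}{X'}]$, we obtain
\baligns
\wass(\Qn[s+1], \Qinf[s+1]) \leq \E\twonorm{(X - Z) + \eps_s\,(f_{\Qn[s]}(X) - f_{\Qinf[s]}(Z))}.
\ealigns
Adding and subtracting $f_{\Qn[s]}(Z)$ then splits the drift gap into a \emph{Lipschitz-in-point} piece $f_{\Qn[s]}(X) - f_{\Qn[s]}(Z)$ and a \emph{Lipschitz-in-measure} piece $f_{\Qn[s]}(Z) - f_{\Qinf[s]}(Z)$, which I would handle separately.

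Next I would bound each piece. For the Lipschitz-in-point estimate, I would differentiate the integrand of $f_\mu$ in its second argument; \cref{asp:kernel} controls the first- and second-order kernel derivatives by $\sqrt{d}\kappa^2$ and $d\kappa^2$ respectively, and \cref{asp:score} gives $\twonorm{\score(X')} \leq L\twonorm{X' - {x^\star}}$, which together yield
\baligns
\twonorm{f_\mu(x) - f_\mu(y)} \leq \left(\sqrt{d}\kappa^2 L\,\E_\mu\twonorm{\cdot - {x^\star}} + d\kappa^2\right)\twonorm{x - y}.
\ealigns
For the Lipschitz-in-measure piece, I would couple $\Qn[s]$ and $\Qinf[s]$ optimally and bound the integrand $F(y, z) \defeq \score(y)\,k(y, z) + \grad_1 k(y, z)$ Lipschitz in its first argument. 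This is where \cref{asp:kernel-decay} is indispensable: the cross term $\score(y)\grad_1 k(y, z)^\top$ in the Jacobian of $F(\cdot, z)$ has norm of order $L\twonorm{y - {x^\star}}\twonorm{\grad_1 k(y, z)}$, which blows up in $y$ under \cref{asp:kernel} alone. The decay $\twonorm{\grad_x k(x, y)} \leq \gamma/\twonorm{x - y}$ replaces this by a bound of the form $L(\gamma + \sqrt{d}\kappa^2)(1 + \twonorm{z - {x^\star}})$ uniform in $y$; combined with the remaining Jacobian terms and averaged over $Z \sim \Qinf[s]$, this delivers a Lipschitz-in-measure coefficient bounded by $c_2(1 + \mom[{\Qinf[s], P}])$ via the triangle-inequality estimate $\E_Z\twonorm{Z - {x^\star}} \leq \mom[{\Qinf[s], P}] + \E_P\twonorm{\cdot - {x^\star}}$.

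The last ingredient is moment growth. Using the Stein identity $\E_P[\score] = 0$, I would rewrite $\E_\mu[\score(X')\,k(X', x)] = \E_{\mu \indep P}[(\score(X') - \score(Y))\,k(X', x)]$; together with \cref{asp:kernel} and \cref{asp:score} this yields the uniform bound $\supnorm{f_\mu} \leq L\kappa^2\,\mom[\mu, P] + \sqrt{d}\kappa^2$. Applied to the SVGD update, this produces the affine recurrence $\mom[{\Qn[s+1], P}] \leq (1 + \eps_s L\kappa^2)\mom[{\Qn[s], P}] + \eps_s\sqrt{d}\kappa^2$, which telescopes to an exponential envelope $\mom[{\Qn[s], P}] \leq (\mom[{\Qn[0], P}] + \textrm{const})\exp(C b_{s-1})$ with rate $C \leq \kappa^2(3L + d)$ once the parallel bound for $\mom[{\Qinf[s], P}]$ is included. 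Plugging both Lipschitz bounds into the one-step recursion and taking expectations gives $\wass(\Qn[s+1], \Qinf[s+1]) \leq (1 + \eps_s \rho_s)\wass(\Qn[s], \Qinf[s])$ with $\rho_s \leq A + c_1\mom[{\Qn[s], P}] + c_2\mom[{\Qinf[s], P}]$; applying $1 + u \leq e^u$, summing over $s$, and substituting the moment envelope yields $\sum_{s=0}^{r-1}\eps_s \rho_s \leq A\,b_{r-1} + B\,b_{r-1}\exp(C b_{r-1})$, and exponentiating delivers the stated product bound. The principal obstacle is the Lipschitz-in-measure step, since \cref{asp:kernel-decay} is indispensable there to tame the linear growth of the score; everything else reduces to standard kernel and Gr\"onwall estimates.
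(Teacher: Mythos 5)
Your proof follows the same overall strategy as the paper: verify a one-step pseudo-Lipschitz contraction for the SVGD pushforward, control moment growth, and accumulate via a discrete Gr\"onwall bound. The decomposition you use---splitting $f_{\Qn[s]}(X) - f_{\Qinf[s]}(Z)$ into a ``Lipschitz-in-point'' piece and a ``Lipschitz-in-measure'' piece, with \cref{asp:kernel-decay} needed only for the latter to tame the score growth---is exactly the content of \citet[Lem.~12]{gorham2020stochastic}, which the paper simply invokes as a black box. You re-derive it, which amounts to the same two pseudo-Lipschitz conditions ($c_1$ controlling the gradient in the evaluation point, $c_2$ controlling the gradient in the particle location) that the paper verifies in \cref{proof-thm:svgd-discretization-error}.

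The one place where you diverge is the moment-growth estimate, and it is worth flagging because it affects whether you recover the theorem's stated constants. You use only the mean-zero score identity $\E_P[\score] = 0$ to rewrite the drift, leaving the derivative term $\E_\mu[\grad_{X'} k(X', x)]$ bounded crudely by $\sqrt{d}\kappa^2$. That gives $\supnorm{f_\mu} \leq L\kappa^2\,\mom[\mu, P] + \sqrt{d}\kappa^2$ and hence an affine recursion $\mom[{\mu_{s+1}, P}] \leq (1 + \eps_s L\kappa^2)\,\mom[{\mu_s, P}] + \eps_s\sqrt{d}\kappa^2$. This does telescope to an exponential envelope via the standard substitution $a_{s+1} + d/c \leq (1 + \eps_s c)(a_s + d/c)$, but the resulting $B$ acquires an additive $\sqrt{d}/L$ offset and the rate is $L\kappa^2$ rather than $\kappa^2(3L + d)$---a valid bound of the same form but not the one stated. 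The paper instead uses the full operator Stein identity $\E_{Z\sim P}[\vlangarg{k(\cdot, x)}{Z}] = 0$, so that $f_\mu(x) = \E_{X\sim\mu\indep Z\sim P}[\vlangarg{k(\cdot, x)}{X} - \vlangarg{k(\cdot, x)}{Z}]$; every term in the resulting expansion then carries a factor of $\twonorm{X - Z}$, and the recursion for $\mom[{\mu_s, P}]$ becomes purely multiplicative with rate exactly $C = \kappa^2(3L + d)$, which is what produces the stated $B$ and $C$. If you want the statement as written, replace the partial Stein identity with the full one; otherwise your version is correct with modified constants.
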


We highlight that \cref{thm:svgd-discretization-error} applies to \textbf{any} $\pset_1$ initialization of SVGD: the initial particles supporting $\Qn[0]$ could, for example, be drawn \iid from a convenient auxiliary distribution $\Qinf[0]$ or even generated deterministically from some quadrature rule.
To marry this result with the continuous SVGD convergence bound of \cref{sec:main}, we will ultimately require $\Qinf[0]$ to be a continuous distribution with finite $\KL{\Qinf[0]}{P}$.
Hence, our primary desideratum for SVGD initialization is that $\Qn[0]$ have small Wasserstein distance to some $\Qinf[0]$ with $\KL{\Qinf[0]}{P}< \infty$.
Then, by \cref{thm:svgd-discretization-error}, the SVGD discretization error $\wass(\Qn[r], \Qinf[r])$ will remain small whenever the step size sum is not too large.

The proof of \cref{thm:svgd-discretization-error} in \cref{proof-thm:svgd-discretization-error} 
relies on two lemmas.  The first, due to \citet{gorham2020stochastic}, shows that the one-step SVGD pushforward map $\svgd$ (\cref{def:svgd-transport}) is pseudo-Lipschitz with respect to the $1$-Wasserstein distance\footnote{We say a map $\Phi: \pset_1 \to\pset_1$ is \emph{pseudo-Lipschitz} with respect to $1$-Wasserstein distance if, for a constant $C_{\Phi}\in\reals$, some $\xstar\in\reals^d$, and all $\mu,\nu\in\pset_1$, $\wass(\Phi(\mu), \Phi(\nu)) \leq \wass(\mu, \nu)\,(1+\mom[\mu,\xstar]+\mom[\nu,\xstar])\, C_{\Phi}$.} whenever the score function $\grad \log p$ and kernel $k$ fulfill a commonly-satisfied pseudo-Lipschitz condition.
Here, for any $g: \reals^d\to\reals^d$, we define the Lipschitz constant $\Lip(g) \defeq \sup_{x,z\in\reals^d} \twonorm{g(x) - g(z)}/\twonorm{x-z}$.

\begin{lemma}[Wasserstein pseudo-Lipschitzness of SVGD {\citep[Lem.~12]{gorham2020stochastic}}] \label{lem:finite-step}
For $P$ satisfying \cref{asp:score}, 
suppose that the following pseudo-Lipschitz bounds hold 
\begin{align}
    \Lip(s_p(x) k(x, \cdot) + \grad_x k(x, \cdot))
    &\leq c_1 (1 + \twonorm{x-\xstar}), \\
    \Lip(s_p k(\cdot, z) + \grad_x k(\cdot, z))
    &\leq c_2 (1 + \twonorm{z-\xstar}).
\end{align}
for some constants $c_1, c_2 \in \reals$ and all $x,z\in\reals^d$.
Then, for any $\mu,\nu\in\pset_1$,
\begin{align}
    \wass(\svgd[\eps](\mu), \svgd[\eps](\nu)) \leq W_1(\mu, \nu)\left(1 + \epsilon c_{\mu, \nu}\right),
\end{align}
where $\svgd[\eps]$ is the one-step SVGD pushforward (\cref{def:svgd-transport}) and 
$c_{\mu, \nu} = c_1 (1 + \mom[\mu, \xstar]) + c_2 (1 + \mom[\nu, \xstar])$. %
\end{lemma}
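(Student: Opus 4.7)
\textbf{Proof proposal for \cref{lem:finite-step}.}
The plan is a coupling argument built on two independent optimal $W_1$-couplings of $\mu$ and $\nu$. Set $F(z,x) \defeq s_p(x)k(x,z) + \grad_x k(x,z)$, so that the SVGD transport map reads $\svgdt[\mu,\eps](z) = z + \eps \E_{X\sim\mu}[F(z,X)]$ and $\svgdt[\nu,\eps](z) = z + \eps \E_{Y\sim\nu}[F(z,Y)]$. The two hypotheses become
\baligns
    \sup_z \opnorm{\grad_z F(z,x)} \le c_1(1+\twonorm{x}), \qquad
    \sup_x \opnorm{\grad_x F(z,x)} \le c_2(1+\twonorm{z}),
\ealigns
which translate into pseudo-Lipschitz estimates for $F$ in each slot.

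Let $(X_0, Y_0)$ be an optimal $W_1$-coupling of $(\mu,\nu)$ and let $(X, Y)$ be an \emph{independent} optimal $W_1$-coupling of $(\mu, \nu)$. Then $(\svgdt[\mu,\eps](X_0), \svgdt[\nu,\eps](Y_0))$ is a coupling of $\svgd(\mu)$ and $\svgd(\nu)$, so
\baligns
\wass(\svgd(\mu),\svgd(\nu))
    \le \E[\twonorm{\svgdt[\mu,\eps](X_0) - \svgdt[\nu,\eps](Y_0)}].
\ealigns
I would then rewrite the integrand, using independence of $(X_0,Y_0)$ from $(X,Y)$, as
\baligns
\svgdt[\mu,\eps](X_0) - \svgdt[\nu,\eps](Y_0) = (X_0 - Y_0) + \eps\, \E_{(X,Y)}[F(X_0,X) - F(Y_0,Y)]
\ealigns
and insert the intermediate term $F(Y_0, X)$ to split the inner difference as
\baligns
F(X_0,X) - F(Y_0,Y) = \bigl(F(X_0,X) - F(Y_0,X)\bigr) + \bigl(F(Y_0,X) - F(Y_0,Y)\bigr).
\ealigns

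The two terms are then handled by the two hypotheses via the fundamental theorem of calculus. The first slot bound gives $\twonorm{F(X_0,X) - F(Y_0,X)} \le c_1(1+\twonorm{X})\,\twonorm{X_0 - Y_0}$, and the second slot bound gives $\twonorm{F(Y_0,X) - F(Y_0,Y)} \le c_2(1+\twonorm{Y_0})\,\twonorm{X - Y}$. Averaging over $(X,Y)$ (which is independent of $(X_0,Y_0)$) and using $\E[\twonorm{X}] = \mom[\mu]$, $\E[\twonorm{X-Y}] = \wass(\mu,\nu)$, I obtain the pointwise estimate
\baligns
\twonorm{\svgdt[\mu,\eps](X_0) - \svgdt[\nu,\eps](Y_0)}
 \le \twonorm{X_0 - Y_0} + \eps\bigl(c_1(1+\mom[\mu])\twonorm{X_0-Y_0} + c_2(1+\twonorm{Y_0})\wass(\mu,\nu)\bigr).
\ealigns
Taking expectation over the optimal coupling $(X_0,Y_0)$ turns $\E[\twonorm{X_0-Y_0}]$ into $\wass(\mu,\nu)$ and $\E[\twonorm{Y_0}]$ into $\mom[\nu]$, yielding $\wass(\svgd(\mu),\svgd(\nu)) \le \wass(\mu,\nu)(1 + \eps(c_1(1+\mom[\mu]) + c_2(1+\mom[\nu])))$, as required.

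The main obstacle is purely bookkeeping: choosing the right side to telescope on (i.e., introducing $F(Y_0, X)$ rather than $F(X_0, Y)$) is what produces the asymmetric constant $c_1(1+\mom[\mu]) + c_2(1+\mom[\nu])$ rather than its transpose, and the trick of using two \emph{independent} optimal couplings is what lets me rewrite the difference of integrals against different measures as a single integral against the joint law. Once these two choices are made, the estimate reduces to a clean application of the mean-value inequality along straight-line paths in each coordinate.
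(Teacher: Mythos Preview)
The paper does not actually prove \cref{lem:finite-step}; it is quoted verbatim from \citet[Lem.~12]{gorham2020stochastic} and used as a black box in the proof of \cref{thm:svgd-discretization-error}. So there is no in-paper argument to compare against.

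That said, your proof is correct and is exactly the natural argument one would expect (and, in fact, matches the strategy in \citet{gorham2020stochastic}). The two key devices you identify---using an independent pair of optimal $W_1$-couplings so that the difference of expectations $\E_\mu F(X_0,\cdot)-\E_\nu F(Y_0,\cdot)$ can be written as a single expectation over the joint law, and telescoping through $F(Y_0,X)$ to separate the $z$-slot and $x$-slot Lipschitz estimates---are precisely what is needed, and your bookkeeping of which moment ($\mom[\mu]$ versus $\mom[\nu]$) appears with which constant is right. One minor point you leave implicit: the existence of an optimal $W_1$-coupling is standard (e.g., \citet[Thm.~4.1]{villani2009optimal}), and the pointwise Lipschitz estimates require integrability of $F$ along the line segment, which follows from the operator-norm bounds themselves. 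Neither gap is substantive.
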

In \cref{proof-thm:svgd-discretization-error}, we will show that, under \Cref{asp:kernel,,asp:score,asp:kernel-decay}, the preconditions of \cref{lem:finite-step} are fulfilled with $c_1$ and $c_2$ exactly as in \cref{thm:svgd-discretization-error}.
The second lemma, proved in \cref{proof-lem:moment}, controls the growth of the first %
and second absolute moments under SVGD.
\begin{lemma}[SVGD moment growth] \label{lem:moment}
Suppose \Cref{asp:kernel,asp:score} hold, and let $C = \kappa_1^2 (3L + 1)$.
Then the SVGD output $\mu_r$ of \cref{alg:svgd} with $b_{r-1} \defeq \sum_{s=0}^{r-1} \eps_s$ satisfies
\begin{talign}
\mom[\mu_r,\xstar] - \mom
    \leq
\mom[\mu_r, P] 
    &\leq 
\mom[\mu_0, P]\prod_{s=0}^{r-1} (1 + \eps_s C)
    \leq
\mom[\mu_0, P] \exp( C b_{r-1}),  \\
M_{\mu_r, P} &\leq M_{\mu_0, P}\prod_{s=0}^{r-1} (1 + \eps_s C)^2 \leq M_{\mu_0, P} \exp (2 C b_{r-1}) .\label{eq:second-mom}
\end{talign}
\end{lemma}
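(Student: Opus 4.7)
The plan is to prove two one-step contractions,
\[
\mom[\mu_{s+1}, P] \leq (1+\eps_s C)\,\mom[\mu_s, P] \qquad \text{and} \qquad M_{\mu_{s+1}, P} \leq (1+\eps_s C)^2\, M_{\mu_s, P},
\]
iterate them over $s = 0, \dots, r-1$, and then invoke $1 + a \leq e^a$ and the triangle inequality $\mom[\mu_r] \leq \mom[\mu_r, P] + \mom$ to recover the two displayed estimates of the lemma.

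The first-moment contraction is the heart of the argument. Denote the SVGD drift by $V_\mu(x) \defeq \E_{Y \sim \mu}[\vlangarg{k(\cdot, x)}{Y}]$, so that \cref{alg:svgd} yields $X^{s+1} = X + \eps_s V_{\mu_s}(X)$ for $X \sim \mu_s$. Fix an independent coupling $X \sim \mu_s$, $Z \sim P$; the pointwise triangle inequality gives
\[
\mom[\mu_{s+1}, P] \leq \mom[\mu_s, P] + \eps_s\, \E_X\|V_{\mu_s}(X)\|,
\]
so the whole contraction reduces to the drift bound $\E_X\|V_{\mu_s}(X)\| \leq C\,\mom[\mu_s, P]$. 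To prove this I would invoke Stein's identity at $P$, which holds under \Cref{asp:score,asp:kernel} because $k$ is bounded and $p$ decays at infinity: $V_P(x) = 0$ for all $x$. Writing $\psi_x(y) \defeq s_p(y) k(y, x) + \grad_y k(y, x)$, this lets me rewrite $V_{\mu_s}(x) = \E_{(Y, Z')}[\psi_x(Y) - \psi_x(Z')]$ for any coupling of $Y \sim \mu_s$ and $Z' \sim P$, and Jensen's inequality pulls the norm inside. Decomposing $\psi_x(Y) - \psi_x(Z') = (s_p(Y) - s_p(Z'))\, k(Y, x) + s_p(Z')\,(k(Y, x) - k(Z', x)) + (\grad_Y k(Y, x) - \grad_{Z'} k(Z', x))$ and bounding each summand with the $L$-Lipschitzness of $s_p$, the kernel bounds $|k|\leq \kappa^2$, $\|\grad_y k\| \leq \sqrt{d}\kappa^2$, $\|\nabla_y^2 k\|_{\mathrm{op}} \leq d\kappa^2$ implied by \cref{asp:kernel}, and the mean-zero anchor $s_p(x^\star) = 0$ from \cref{asp:score}, should produce $C = \kappa^2(3L + d)$.

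The second-moment contraction follows the same template in $L^2$: squaring $\|X^{s+1} - Z\| \leq \|X - Z\| + \eps_s \|V_{\mu_s}(X)\|$ and taking expectations yields a cross term that I would handle by Cauchy--Schwarz, reducing matters to the $L^2$ drift bound $\E_X\|V_{\mu_s}(X)\|^2 \leq C^2 M_{\mu_s, P}$. This in turn follows from Jensen's inequality applied to the same Stein-identity representation $V_{\mu_s}(x) = \E[\psi_x(Y) - \psi_x(Z')]$, now squaring the pointwise bound on $\|\psi_x(Y) - \psi_x(Z')\|$ before integrating.

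The main technical obstacle lies in making the drift bound strictly multiplicative in $\mom[\mu_s, P]$ rather than additive. The product-rule term $s_p(Z')(k(Y, x) - k(Z', x))$ is naturally bounded by $L\|Z' - x^\star\|\sqrt{d}\kappa^2 \|Y - Z'\|$, which carries a factor $\|Z' - x^\star\|$ not a priori controlled by $\|Y - Z'\|$. The delicate step is to absorb this factor into $\mom[\mu_s, P]$ via a triangle argument such as $\E\|Z' - x^\star\| \leq \E\|Z' - X\| + \E\|X - x^\star\|$, exploiting the freedom in how $X$, $Y$, and $Z'$ are jointly distributed. Getting this bookkeeping right is what produces the $3L$ coefficient in $C = \kappa^2(3L + d)$ and keeps the contraction strictly multiplicative, closing the induction.
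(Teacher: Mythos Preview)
Your high-level architecture---one-step growth bounds $m_{\mu_{s+1},P}\le(1+\eps_sC)m_{\mu_s,P}$ and $M_{\mu_{s+1},P}\le(1+\eps_sC)^2M_{\mu_s,P}$, then iterate and use $1+a\le e^a$ together with $m_{\mu_r}\le m_{\mu_r,P}+m_P$---is exactly the paper's. The gap is in the drift bound, precisely where you flag it.

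Your decomposition places the $P$-variable in the score:
\[
\psi_x(Y)-\psi_x(Z') = (s_p(Y)-s_p(Z'))\,k(Y,x) + s_p(Z')\,(k(Y,x)-k(Z',x)) + (\nabla_Y k - \nabla_{Z'}k),
\]
and the middle term is then controlled by $L\|Z'-x^\star\|\cdot\sqrt{d}\kappa^2\|Y-Z'\|$ (or by $2\kappa^2 L\|Z'-x^\star\|$ if you use boundedness instead). Neither option yields a bound proportional to $m_{\mu_s,P}$: the triangle move you sketch, $\|Z'-x^\star\|\le\|Z'-X\|+\|X-x^\star\|$, still leaves a factor $\E_{\mu_s}\|X-x^\star\|$ (or a product of two norms needing Cauchy--Schwarz into second moments), so you end up with an \emph{additive} constant depending on $P$ or $x^\star$ rather than the purely multiplicative $(1+\eps_sC)$ needed for the stated lemma. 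No arrangement of the coupling of $X,Y,Z'$ removes this, because $\|Z'-x^\star\|$ is a $P$-only quantity.

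The paper avoids the obstacle by reversing the roles in the product-rule split, keeping the $\mu$-variable in the score:
\[
s_p(X)k(X,x)-s_p(Z)k(Z,x) = k(Z,x)\,(s_p(X)-s_p(Z)) + (k(X,x)-k(Z,x))\,s_p(X).
\]
Now the middle term is bounded using $|k(X,x)-k(Z,x)|\le 2\kappa^2$ (boundedness, not Lipschitzness of $k$) and, crucially, $\E_P[s_p]=0$ together with Jensen and the Lipschitz property:
\[
\|s_p(X)\| = \|s_p(X)-\E_{Y\sim P}[s_p(Y)]\| \le \E_{Y\sim P}\|s_p(X)-s_p(Y)\| \le L\,\E_{Y\sim P}\|X-Y\|.
\]
Taking $\E_{X\sim\mu}$ gives exactly $L\,m_{\mu,P}$, so this term contributes $2\kappa^2 L\,m_{\mu,P}$; combined with $\kappa^2 L\,m_{\mu,P}$ from the first term and $d\kappa^2 m_{\mu,P}$ from the gradient term, one gets $C=\kappa^2(3L+d)$ and the bound is strictly multiplicative. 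The anchor $x^\star$ is not used here at all.

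A second, smaller point: the paper's drift bound is actually \emph{uniform in $x$}, i.e.\ $\|T_{\mu,\eps}(x)-x\|\le \eps C\,m_{\mu,P}$ for every $x$. This makes the second-moment step immediate---$\|T_{\mu,\eps}(X)-X\|^2\le \eps^2C^2 m_{\mu,P}^2\le \eps^2C^2 M_{\mu,P}$ by Jensen---without a separate $L^2$ drift estimate or Cauchy--Schwarz on the cross term.
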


The key to the proof of \cref{lem:moment} is that we show the norm of any SVGD update, i.e., $\|T_{\mu, \eps}(x) - x\|_2$, is controlled by $m_{\mu, P}$, the first absolute moment of $\mu$ measured against $P$. 
This is mainly due to the Lipschitzness of the score function $s_p$ and our assumptions on the boundedness of the kernel and its derivatives. 
Then, we can use the result to control the growth of $m_{\mu_r, P}$ across iterations since $m_{\mu_{r+1}, P}  = \E_{(X,Z) \sim \mu_r\otimes P}[\twonorm{\svgdt[\mu_r, \eps_r](X) - Z}]$. 
The same strategy applies to the second absolute moment $M_{\mu, P}$. 
The proof of \cref{thm:svgd-discretization-error} then follows directly from \cref{lem:finite-step} where we plug in the first moment bound of \cref{lem:moment}. 

\section{KSD Discretization Error of SVGD}
Our next result translates the Wasserstein error bounds of \cref{thm:svgd-discretization-error} into KSD error bounds.

\begin{theorem}[KSD discretization error of SVGD] \label{thm:ksd-wass}
Suppose \cref{asp:kernel,,asp:score,asp:kernel-decay} hold.
For any $\Qn[0],\Qinf[0]\in\pset_1$, the outputs of \cref{alg:svgd},  $\Qn[r] = \SVGD[{\Qn[0]}]{r}$ and $\Qinf[r] = \SVGD[{\Qinf[0]}]{r}$,  satisfy
\begin{talign}
\KSD{\Qn[r]}{\Qinf[r]} 
    &\leq
(\kappa_1 L + \kappa_2 d) w_{0,n} \exp(b_{r-1} (A + B \exp(C b_{r-1}))) \\
     &+ \kappa_1 d^{1/4} L \sqrt{2M_{\Qinf[0], P} w_{0,n}} \exp (b_{r-1}(2 C %
     + A + B \exp(C b_{r-1})) / 2)
\end{talign}
for $w_{0,n} \defeq \wass(\Qn[0], \Qinf[0])$ and $A,B,C$ defined as in \cref{thm:svgd-discretization-error}.
\end{theorem}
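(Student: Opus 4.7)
The plan is to rewrite $\KSD{\Qn[r]}{\Qinf[r]}$ via a coupling, pointwise bound the Stein operator difference for arbitrary $g \in \cH^d$ with $\|g\|_{\cH^d} \leq 1$, and finally invoke \cref{thm:svgd-discretization-error} and \cref{lem:moment} to propagate the round-$r$ quantities back to their initial counterparts. Given $X \sim \Qn[r]$ coupled with $Z \sim \Qinf[r]$, we have $\E_{\Qn[r]}[\langevin g] - \E_{\Qinf[r]}[\langevin g] = \E[(\langevin g)(X) - (\langevin g)(Z)]$, and I would split this integrand as
\begin{talign*}
(\langevin g)(X) - (\langevin g)(Z) = \inner{s_p(X) - s_p(Z)}{g(X)} + \inner{s_p(Z)}{g(X) - g(Z)} + [\gradd g(X) - \gradd g(Z)].
\end{talign*}

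For the first summand, the reproducing kernel bound $\|g(x)\|_2 \leq \kappa\|g\|_{\cH^d}$ (\cref{asp:kernel} with $|I|=0$) combined with the Lipschitzness of $s_p$ (\cref{asp:score}) yields at most $\kappa L\|g\|_{\cH^d}\|X-Z\|$. For the third summand, I would use that $x \mapsto \partial_{x_i}k(\cdot, x)$ is $\sqrt{d}\,\kappa$-Lipschitz in $\cH$ (via the $|I|=2$ part of \cref{asp:kernel}) together with $\sum_i \|g_i\|_\cH \leq \sqrt{d}\|g\|_{\cH^d}$ to obtain $|\gradd g(X) - \gradd g(Z)| \leq d\kappa\|g\|_{\cH^d}\|X-Z\|$. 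Taking expectation under the $\wass$-optimal coupling of $\Qn[r]$ and $\Qinf[r]$ and supping over $g$ recovers the $\kappa(L+d)\,w_{r,n}$ contribution with $w_{r,n} \defeq \wass(\Qn[r], \Qinf[r])$.

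The middle summand $\inner{s_p(Z)}{g(X) - g(Z)}$ is the main obstacle, because the naive Lipschitz estimate $\|g(X) - g(Z)\|_2 \leq \sqrt{d}\,\kappa\|g\|_{\cH^d}\|X-Z\|$ followed by Cauchy-Schwarz would force a $\sqrt{\E\|X-Z\|^2}$ factor, i.e., a Wasserstein-$2$ quantity that \cref{thm:svgd-discretization-error} does not control. My workaround is to \emph{interpolate} the Lipschitz bound $\|k(\cdot, x) - k(\cdot, z)\|_\cH^2 \leq d\kappa^2\|x-z\|^2$ with the trivial uniform bound $\|k(\cdot, x) - k(\cdot, z)\|_\cH^2 \leq 4\kappa^2$ via the elementary inequality $\min(a, b) \leq \sqrt{ab}$, yielding the sub-Lipschitz bound $\|g(X) - g(Z)\|_2 \leq \sqrt{2}\,d^{1/4}\kappa\|g\|_{\cH^d}\sqrt{\|X-Z\|}$. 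Cauchy-Schwarz under the $\wass$-optimal coupling then gives
\begin{talign*}
\E[\|s_p(Z)\|\sqrt{\|X-Z\|}] \leq \sqrt{\E\|s_p(Z)\|^2}\cdot \sqrt{w_{r,n}},
\end{talign*}
and for the remaining factor I would exploit $\E_P[s_p] = 0$ together with Lipschitzness and an independent $Y \sim P$ to obtain $\E\|s_p(Z)\|^2 = \E\|s_p(Z) - \E_P s_p\|^2 \leq \E\|s_p(Z) - s_p(Y)\|^2 \leq L^2 M_{\Qinf[r], P}$. This produces the second summand $d^{1/4}\kappa L\sqrt{2\,M_{\Qinf[r], P}\,w_{r,n}}$.

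Assembling the three contributions gives the round-$r$ bound $\KSD{\Qn[r]}{\Qinf[r]} \leq \kappa(L+d)\,w_{r,n} + d^{1/4}\kappa L\sqrt{2\,M_{\Qinf[r], P}\,w_{r,n}}$. To close, I would substitute the bound $w_{r,n} \leq w_{0,n}\exp(b_{r-1}(A + B\exp(Cb_{r-1})))$ from \cref{thm:svgd-discretization-error} and the second-moment bound $M_{\Qinf[r], P} \leq M_{\Qinf[0], P}\exp(2Cb_{r-1})$ from \cref{lem:moment}, then pull the $\exp(2Cb_{r-1})$ inside the square root and regroup to arrive at the stated factor $\exp((2C + A + B\exp(Cb_{r-1}))b_{r-1}/2)$.
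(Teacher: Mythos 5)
Your proposal is correct and follows essentially the same route as the paper. The paper factors the round-$r$ estimate $\KSD{\mu}{\nu} \leq \kappa(d+L)\wass(\mu,\nu) + d^{1/4}\kappa L\sqrt{2M_{\nu,P}\wass(\mu,\nu)}$ into a standalone lemma (\cref{lem:stein-mmd-wass}) and then cites it together with \cref{lem:moment} and \cref{thm:svgd-discretization-error}; you instead inline a proof of that lemma, but with the identical decomposition of $\langarg{g}{X}-\langarg{g}{Z}$ into a $\gradd g$ difference, a $\score$-difference term, and the cross term, the same $\min(a,b)\leq\sqrt{ab}$ interpolation between the Lipschitz and uniform bounds on $g(X)-g(Z)$, the same use of $\E_P[s_p]=0$ to convert $\E\twonorm{s_p(Z)}^2$ into $L^2 M_{\Qinf[r],P}$, and the same final substitution and regrouping of exponentials.
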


Our proof of \cref{thm:ksd-wass} relies on the following lemma, proved in \cref{proof-lem:stein-mmd-wass}, that shows that the KSD is controlled by the $1$-Wasserstein distance.
\begin{lemma}[KSD-Wasserstein bound]%
\label{lem:stein-mmd-wass}
Suppose \Cref{asp:kernel,asp:score} hold. 
For any $\mu, \nu\in\pset_1$,
\begin{talign}
\KSD{\mu}{\nu}
    &\leq 
(\kappa_1 L +  \kappa_2 d) \wass(\mu,\nu) 
    \!+\!
 \kappa_1 d^{1/4} L  \sqrt{2M_{\nu, P}\wass(\mu,\nu)}.
\end{talign}
\end{lemma}

\cref{lem:stein-mmd-wass} is proved in two steps. 
We first linearize $\langarg{g}{x}$ in the KSD definition through the Lipschitzness of $s_p$ and the boundedness and Lipschitzness of RKHS functions. 
Then, we assign a 1-Wasserstein optimal coupling of $(\mu, \nu)$ to obtain the Wasserstein bound on the right. \\

\begin{proofof}{\cref{thm:ksd-wass}}
The result follows directly from \cref{lem:stein-mmd-wass}, the second moment bound of \cref{lem:moment}, and \cref{thm:svgd-discretization-error}.
\end{proofof}

\section{A Finite-particle Convergence Rate for SVGD}
\label{sec:main}

To establish our main SVGD convergence result, we combine \cref{thm:svgd-discretization-error,thm:ksd-wass} with the following descent lemma for continuous SVGD error due to \citet{salim2021complexity} which shows that continuous SVGD decreases the KL divergence to $P$ and drives the KSD to $P$ to zero.

\newcommand{\maxeps}[1][1]{R_{\alpha,#1}}
\begin{lemma}[{Continuous SVGD descent lemma \citep[Thm.~3.2]{salim2021complexity}}] %
\label{lemma:descent}
Suppose \cref{asp:kernel,,asp:score,,asp:t1} hold,
and consider the outputs $\Qinf[r] = \SVGD[{\Qinf[0]}]{r}$ and $\Qinf[r+1] = \SVGD[{\Qinf[0]}]{r+1}$ of \cref{alg:svgd} with $\Qinf[0] \ll P$.
If $\max_{0\leq s \leq r}\epsilon_s \le \maxeps[2]$ for some $\alpha > 1$ and 
\begin{talign} \label{eq:max-eps}
\maxeps[p]\!\defeq\! \min\left(\frac{p}{\kappa_1^2(L + \alpha^2)}, \!(\alpha \!- \!1) (1 + L\mom[{\Qinf[0]},{x^\star}] \allowbreak + 2L\sqrt{\frac{2}{\lambda}\KL{\Qinf[0]}{P}})\right)
\stext{for}
p\in\{1,2\},
\end{talign}
then
\begin{talign} \label{eq:kl-descent}
    \KL{\Qinf[r+1]}{P} - \KL{\Qinf[r]}{P} \leq -\epsilon_r\left(1 - \frac{\kappa_1^2 (L + \alpha^2)}{2}\epsilon_r\right)\KSD{\Qinf[r]}{P}^2. 
\end{talign}
\end{lemma}

By summing the result \cref{eq:kl-descent} over $r \in \{0, \dots, t\}$, we obtain the following corollary.
\begin{corollary} \label{col:inf-ksd-convergence}
Under the assumptions and notation of \cref{lemma:descent}, suppose $\max_{0\le r\le t}\epsilon_r \leq \maxeps[1]$ for some $\alpha > 1$, and let 
$\pi_r \defeq \frac{c(\epsilon_r)}{\sum_{r=0}^{t} c(\epsilon_r)}$ for $c(\epsilon) \defeq \epsilon \left(1 - \frac{\kappa_1^2 (L + \alpha^2)}{2}\epsilon\right)$. 
Since $\frac{\epsilon}{2} \leq c(\epsilon) < \epsilon$, we have
\begin{talign}
    \sum_{r=0}^{t}\pi_r\KSD{\Qinf[r]}{P}^2 \leq \frac{1}{\sum_{r=0}^{t} c(\epsilon_r)} \KL{\Qinf[0]}{P} \leq 
    \frac{2}{\sum_{r=0}^{t} \eps_r}\KL{\Qinf[0]}{P}.
\end{talign}
\end{corollary}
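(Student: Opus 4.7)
The plan is to telescope the one-step descent inequality \cref{eq:kl-descent} over $r=0,\dots,t$ and then normalize by the cumulative weight $\sum_{r=0}^{t} c(\epsilon_r)$.

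First I would verify that \cref{lemma:descent} is actually applicable at every round $r\le t$. The corollary's hypothesis is $\max_{0\le r\le t}\epsilon_r\le \maxeps$, whereas the descent lemma requires $\max_{0\le s\le r}\epsilon_s\le\maxeps[2]$ for each round of interest. Inspecting the definition of $\maxeps[p]$, the only $p$-dependence sits in the term $p/(\kappa^2(L+\alpha^2))$, which is strictly smaller at $p=1$ than at $p=2$, while the other argument of the min is $p$-independent. Hence $\maxeps\le\maxeps[2]$, and the corollary's step-size assumption supplies the descent lemma's step-size assumption at each intermediate round.

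With applicability secured, I would sum \cref{eq:kl-descent} from $r=0$ to $r=t$. The left-hand sides telescope, yielding
\begin{talign*}
\sum_{r=0}^{t} c(\epsilon_r)\,\KSD{\Qinf[r]}{P}^2 \;\le\; \KL{\Qinf[0]}{P}-\KL{\Qinf[t+1]}{P} \;\le\; \KL{\Qinf[0]}{P},
\end{talign*}
where the final step uses the nonnegativity of the KL divergence. Dividing both sides by the (strictly positive) quantity $\sum_{r=0}^{t} c(\epsilon_r)$ and recognizing the left-hand side as a $\pi_r$-weighted average of squared KSDs delivers the first advertised bound.

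For the second inequality I would invoke the stated sandwich $\tfrac{\epsilon}{2}\le c(\epsilon)<\epsilon$. The lower bound $c(\epsilon_r)\ge \epsilon_r/2$ holds because $\epsilon_r\le\maxeps\le 1/(\kappa^2(L+\alpha^2))$ forces $\kappa^2(L+\alpha^2)\epsilon_r/2\le 1/2$, so $\sum_{r=0}^{t} c(\epsilon_r)\ge \tfrac12\sum_{r=0}^{t}\epsilon_r$; substituting into the previous display gives the claim. The argument is essentially routine telescoping; the only point that merits any care is the consistency check between the $\maxeps$ appearing in the corollary's hypothesis and the $\maxeps[2]$ appearing in \cref{lemma:descent}, so that the per-round descent inequality is legitimately applied throughout the sum.
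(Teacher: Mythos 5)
Your proof is correct and follows the same telescoping approach the paper uses: sum the descent inequality \cref{eq:kl-descent} over $r=0,\dots,t$, drop $\KL{\Qinf[t+1]}{P}\geq 0$, divide by $\sum_{r=0}^t c(\epsilon_r)$, and then relax $c(\epsilon_r)\geq \epsilon_r/2$ using the step-size bound $\epsilon_r\leq\maxeps[1]\leq 1/(\kappa^2(L+\alpha^2))$. Your explicit check that $\maxeps[1]\leq\maxeps[2]$, so that \cref{lemma:descent} is legitimately invoked at every round, is a detail the paper elides but is indeed the point most worth verifying.
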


Finally, we arrive at our main result that 
bounds the approximation error of $n$-particle SVGD in terms of the chosen step size sequence and the initial discretization error $\wass(\Qn[0], \Qinf[0])$. 

\begin{theorem}[KSD error of finite-particle SVGD] \label{thm:main-svgd}
Suppose \cref{asp:kernel,,asp:score,,asp:kernel-decay,asp:t1} hold, fix any $\Qinf[0] \ll P$ and $\Qn[0] \in\pset_1$, and let $w_{0,n} \defeq \wass(\Qn[0], \Qinf[0])$.
If $\max_{0\le r< t}\epsilon_r \leq \epsilon_t \defeq \maxeps[1]$\footnote{Note that the value assigned to $\epsilon_t$ does not have any impact on the algorithm that generates $\mu_r^n$ when $r \leq t$.} for some $\alpha > 1$ and $\maxeps[1]$ defined in \cref{lemma:descent}, then
the \cref{alg:svgd} outputs $\Qn[r] = \SVGD[{\Qn[0]}]{r}$ satisfy
\begin{talign}  \label{eq:main-svgd}
    \min_{0\leq r \leq t}\KSD{\Qn[r]}{P} \leq \sum_{r=0}^{t} \pi_r \KSD{\Qn[r]}{P} \leq a_{t-1} + \sqrt{\frac{2}{\maxeps[1]+\,b_{t-1}} \KL{\Qinf[0]}{P}}, 
\end{talign}
for  
$\pi_r$ as defined in \cref{lemma:descent},
$(A, B, C)$ as defined in \cref{thm:svgd-discretization-error}, 
$b_{t-1} \defeq \sum_{r=0}^{t-1} \eps_r$,
and
\begin{talign}\label{eq:at}
a_{t-1} 
    &\defeq (\kappa_1 L + \kappa_2 d) w_{0,n} \exp(b_{t-1} (A + B \exp(C b_{t - 1}))) \\
    &+ \kappa_1 d^{1/4}  L \sqrt{2M_{\Qinf[0], P} w_{0,n}} \exp (b_{t-1}(2 C %
    + A + B \exp(C b_{t-1})) / 2).
\end{talign}
\end{theorem}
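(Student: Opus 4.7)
The plan is to decompose the finite-particle KSD error via the triangle inequality
\begin{talign*}
\KSD{\Qn[r]}{P} \leq \KSD{\Qn[r]}{\Qinf[r]} + \KSD{\Qinf[r]}{P},
\end{talign*}
separating the discretization gap between $n$-particle and continuous SVGD from the continuous SVGD approximation error to $P$. Taking the $\pi_r$-weighted average over $r = 0, \ldots, t$ inherits the same decomposition, and the left inequality in \cref{eq:main-svgd} is immediate because $\min_r \KSD{\Qn[r]}{P}$ is no larger than any convex combination of $(\KSD{\Qn[r]}{P})_{r}$.

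For the continuous SVGD term, I would apply Jensen's inequality to the concave square root and then invoke \cref{col:inf-ksd-convergence} to obtain
\begin{talign*}
\sum_{r=0}^{t} \pi_r \KSD{\Qinf[r]}{P} \leq \sqrt{\sum_{r=0}^{t} \pi_r \KSD{\Qinf[r]}{P}^2} \leq \sqrt{\frac{2}{\sum_{r=0}^{t}\eps_r}\KL{\Qinf[0]}{P}}.
\end{talign*}
The step-size hypothesis $\max_{0 \le r < t}\eps_r \leq \eps_t = \maxeps[1]$ gives $\max_{0 \le r \le t}\eps_r \leq \maxeps[1] \leq \maxeps[2]$, so both the descent lemma and its corollary apply on indices $0 \le r \le t$. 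Substituting $\sum_{r=0}^{t}\eps_r = b_{t-1} + \eps_t = b_{t-1} + \maxeps[1]$ then recovers the second term of \cref{eq:main-svgd}.

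For the discretization term $\sum_{r} \pi_r \KSD{\Qn[r]}{\Qinf[r]}$, I would apply \cref{thm:ksd-wass} uniformly in $r$. The resulting per-$r$ bound is nondecreasing in the cumulative step size $b_{r-1}$, since both summands are products of exponentials and a (constant) square root factor in which $b_{r-1}$ appears only with nonnegative coefficients. Hence $\KSD{\Qn[r]}{\Qinf[r]} \leq a_{t-1}$ for every $r \leq t$ after the monotone relaxation $b_{r-1} \leq b_{t-1}$, and averaging against $\pi_r$ preserves this bound. Combined with the previous step, this yields \cref{eq:main-svgd}.

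I anticipate the main subtlety to be step-size bookkeeping: verifying that the single assumption $\eps_t = \maxeps[1]$ simultaneously satisfies the hypothesis of \cref{col:inf-ksd-convergence} on indices $r \le t$ (which controls the descent lemma through $\maxeps[1] \leq \maxeps[2]$) and delivers the denominator $\maxeps[1] + b_{t-1}$ in the final bound, while reconciling any minor inconsistency between the $b_{r-1}$ appearing in \cref{thm:ksd-wass} and the $b_{t-1}, b_t$ appearing inside $a_{t-1}$. Beyond that, the proof reduces to a mechanical composition of \cref{thm:ksd-wass} and \cref{col:inf-ksd-convergence} glued together by the triangle and Jensen inequalities.
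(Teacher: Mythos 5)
Your proof is correct and follows the same overall plan as the paper: triangle inequality to split the KSD error into a discretization term and a continuous-SVGD term, \cref{thm:ksd-wass} plus the monotone relaxation $b_{r-1}\le b_{t-1}$ to uniformly bound $\KSD{\Qn[r]}{\Qinf[r]}\le a_{t-1}$, and \cref{col:inf-ksd-convergence} with $\sum_{r=0}^t \eps_r = b_{t-1}+\maxeps[1]$ for the continuous term. The one spot where you diverge is in how Jensen is organized: you apply the triangle inequality to $\KSD{\Qn[r]}{P}$ \emph{first} (splitting off $\KSD{\Qn[r]}{\Qinf[r]}\le a_{t-1}$) and then apply concave-$\sqrt{\cdot}$ Jensen directly to $\sum_r\pi_r\KSD{\Qinf[r]}{P}$, whereas the paper subtracts $a_{r-1}$, passes to $\sum_r\pi_r(\KSD{\Qn[r]}{P}-a_{r-1})^2\le\sum_r\pi_r\KSD{\Qinf[r]}{P}^2$, and then applies convex-$x^2$ Jensen. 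Your ordering is actually a little tighter: the paper's intermediate inequality $(\KSD{\Qn[r]}{P}-a_{r-1})^2\le\KSD{\Qinf[r]}{P}^2$ needs a small side argument (e.g.\ replace $\KSD{\Qn[r]}{P}-a_{r-1}$ by its positive part) when $\KSD{\Qn[r]}{P}<a_{r-1}$, a case your route avoids entirely by keeping all summands nonnegative from the start. Your step-size bookkeeping remarks ($\maxeps[1]\le\maxeps[2]$, $\sum_{r\le t}\eps_r=b_{t-1}+\maxeps[1]$, and the harmless $b_t$ vs.\ $b_{t-1}$ slack inside $a_{t-1}$) are all accurate.
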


\begin{proof}
By the triangle inequality (\Cref{ksd_triangle}) and \Cref{thm:ksd-wass} 
we have 
\begin{talign}
|\KSD{\Qn[r]}{P} - \KSD{\Qinf[r]}{P}| \leq \KSD{\Qn[r]}{\Qinf[r]}\leq a_{r-1}
\end{talign}
for each $r$.
Therefore
\begin{talign} \label{eq:final-step-1}
    \sum_{r=0}^{t} \pi_r (\KSD{\Qn[r]}{P} - a_{r-1})^2 \leq \sum_{r=0}^{t} \pi_r \KSD{\Qinf[r]}{P}^2 \leq \frac{2}{\maxeps[1]+\,b_{t-1}} \KL{\Qinf[0]}{P}, 
\end{talign}
where the last inequality follows from \Cref{col:inf-ksd-convergence}. 
Moreover, by Jensen's inequality,
\begin{talign} \label{eq:final-step-2}
    \sum_{r=0}^{t} \pi_r (\KSD{\Qn[r]}{P} - a_{r-1})^2 \geq \left(\sum_{r=0}^{t} \pi_r \KSD{\Qn[r]}{P} - \sum_{r=0}^{t} \pi_r a_{r-1} \right)^2.
\end{talign}
Combining \cref{eq:final-step-1} and \cref{eq:final-step-2}, we have
\begin{talign}
    \sum_{r=0}^{t} \pi_r \KSD{\Qn[r]}{P} &\leq \sum_{r=0}^{t} \pi_r a_{r-1} + \sqrt{\frac{2}{\maxeps[1]+\,b_{t-1}} \KL{\Qinf[0]}{P}}. 
\end{talign}
We finish the proof by noticing that $\sum_{r=0}^{t} \pi_r a_{r-1} \leq \max_{0\leq r \leq t} a_{r-1} = a_{t-1}$. 
\end{proof}

The following corollary, proved in \cref{proof-col:rate}, provides an explicit SVGD convergence bound and rate by choosing the step size sum to  balance the terms on the right-hand side of \cref{eq:main-svgd}. 
In particular, \cref{col:rate} instantiates an explicit SVGD step size sequence that drives the kernel Stein discrepancy to zero at an order $1/\sqrt{\log\log(n)}$ rate.

\newcommand{\tw}{\bar{w}_{0,n}}
\newcommand{\tA}{\bar{A}}
\newcommand{\tB}{\bar{B}}
\newcommand{\tC}{\bar{C}}
\newcommand{\growth}{\psi_{\tB,\tC}}
\newcommand{\lltw}[1][\tw]{\log\log(e^e+\frac{1}{#1})}
\begin{corollary}[A finite-particle convergence rate for SVGD] \label{col:rate}
    Instantiate the notation and assumptions of \cref{thm:main-svgd}, let $(\tw,\tA,\tB,\tC)$ be any upper bounds on $(w_{0,n},A,B,C)$ respectively,
    and define the growth functions
    \begin{talign}\label{eq:growth}
        \phi(w) \defeq \lltw[w]
        \qtext{and}
        \growth(x, y, \beta) &\defeq \frac{1}{\tC}\log (\frac{1}{\tB}\max(\tB,\frac{1}{\beta}\log \frac{1}{x} \!-\! y)).
    \end{talign}
    If the step size sum $b_{t-1} = \sum_{r=0}^{t-1}\eps_r = \optsum$ for
    \begin{talign} \label{eq:optimal-rate-bt}
        \optsum &\defeq \min\Big(\growth\big(\tw\sqrt{\phi(\tw)}, \tA, \beta_1\big), \growth\big(\tw\,\phi(\tw), \tA + 2\tC, %
        \beta_2\big)\Big),  \\
        \beta_1 &\defeq \max\left(
         1, \growth\big(\tw\sqrt{\phi(\tw)}, \tA, 1\big)\right), \qtext{and}  \\
        \beta_2 &\defeq \max\left(
        1, \growth\big(\tw\,\phi(\tw), \tA + 2\tC, %
        1\big)\right)
    \end{talign}
then
\begin{align}
&\min_{0 \leq r \leq t} \KSD{\Qn[r]}{P} \\
    &\leq \label{eq:tw-rate-cases}
\begin{cases}
(\kappa_1 L + \kappa_2 d) \tw
    + \kappa_1 d^{1/4}  L \sqrt{2M_{\Qinf[0], P} \tw}
    + \sqrt{\frac{2}{\maxeps[1]} \KL{\Qinf[0]}{P}}
    & \text{if } \optsum = 0 \\
\frac{(\kappa_1 L + \kappa_2 d) 
+  \kappa d^{1/4} L \sqrt{2M_{\Qinf[0], P} }}{\sqrt{\phi(\tw)}}
    +
\sqrt{\frac{{2\KL{\Qinf[0]}{P}}}{\maxeps[1] + \frac{1}{\tC}\log (\frac{1}{\tB}(\frac{\log({1/}{(\tw\,\phi(\tw))})}{\max(1,
\growth(\tw, 0, 1))}
- \tA-2\tC))} }
    & \text{otherwise}
\end{cases} \\
    &=\label{eq:tw-rate-O}
O\bigg(\textfrac{1}{\sqrt{\lltw}}\bigg).
\end{align}
If, in addition, $\Qn[0] = \frac{1}{n}\sum_{i=1}^n \delta_{x_i}$ for  $x_i\overset{\textrm{\tiny{i.i.d.}}}{\sim}\Qinf[0]$ with $M_{\Qinf[0]} \defeq \E_{\Qinf[0]}[\twonorm{\cdot}^2]<\infty$, then 
\begin{talign}\label{eq:iid-estimate}
\tw \defeq \frac{M_{\Qinf[0]}\log(n)^{\indic{d=2}}}{\delta\, n^{{1/}{(2\vee d)}}} \geq w_{0,n}
\end{talign}
with probability at least $1-c\delta$ for a universal constant $c>0$.
    Hence, with this choice of $\tw$, 
    \begin{talign}
        \min_{0 \leq r \leq t} \KSD{\Qn[r]}{P} 
        = 
        O\bigg(\textfrac{1}{\sqrt{\log\log(n\delta)}}\bigg)
    \end{talign}
    with probability at least $1-c\delta$.
\end{corollary}
Specifically, given any upper bounds  $(\tw,\tA,\tB,\tC)$  on the quantities $(w_{0,n},A,B,C)$ defined in \cref{thm:main-svgd}, \cref{col:rate} specifies a recommended step size sum $\optsum$ to achieve an order ${1}{/\sqrt{\lltw}}$ rate of SVGD convergence in KSD.
Several remarks are in order.
First, the target step size sum $\optsum$ is easily computed given $(\tw,\tA,\tB,\tC)$.
Second, we note that the target $\optsum$ can equal $0$ when the initial Wasserstein upper bound $\bar{w}_{0,n}$ is insufficiently small since $\log(\frac{1}{\tB}\max(\tB,\frac{1}{\beta}\log \frac{1}{x} \!-\! y)) =  \log(\frac{\tB}{\tB}) = 0$ for small arguments $x$.
In this case, the setting $b_{t-1} = 0$ amounts to not running SVGD at all or, equivalently, to setting all step sizes to $0$.

Third, \cref{col:rate} also implies a time complexity for achieving an order ${1}{/\sqrt{\lltw}}$ error rate.
Recall that \cref{thm:main-svgd} assumes that $\max_{0\leq r < t} \epsilon_r \leq R_{\alpha, 1}$ for $R_{\alpha, 1}$ defined in \cref{eq:max-eps}. 
Hence, $t^\star \defeq \ceil{\optsum / R_{\alpha, 1}}$ rounds of SVGD are necessary to achieve the recommended setting $\sum_{r=0}^{t^\star-1}\epsilon_r = \optsum$ while also satisfying the constraints of \cref{thm:main-svgd}.
Moreover, $t^\star$ rounds are also sufficient if each step size is chosen equal to $\optsum/t^\star$.
In addition, since $\optsum = O(\lltw)$, \cref{col:rate} implies that SVGD can deliver $\min_{0 \leq r \leq t^\star} \KSD{\Qn[r]}{P} \leq \Delta$ in $t^\star = O(1/\Delta^2)$ rounds.
Since the computational complexity of each SVGD round is dominated by $\Theta(n^2)$ kernel gradient evaluations (i.e., evaluating $\grad_x k(x_i, x_j)$ for each pair of particles $(x_i,x_j)$), the overall computational complexity to achieve the order ${1}{/\sqrt{\lltw}}$ error rate is $O(n^2 \ceil{\optsum / R_{\alpha, 1}}) = O(n^2 \lltw)$.
\section{\pref{Theorem}{thm:svgd-discretization-error}}\label{proof-thm:svgd-discretization-error}

In order to leverage \cref{lem:finite-step}, we first show that the pseudo-Lipschitzness conditions of \cref{lem:finite-step} hold given our \cref{asp:score,asp:kernel,asp:kernel-decay}. 
Recall that $s_p$ is Lipschitz and ${x^\star}$ satisfies $s_p({x^\star}) = 0$ by \cref{asp:score}. 
Then, by the triangle inequality, the definition of $\opnorm{\cdot}$, $\twonorm{k(x, z)} \leq \kappa_1^2$ and $\opnorm{\nabla_z\nabla_x k(x, z)} \leq \kappa_1^2$ from \cref{asp:kernel},
and Cauchy-Schwartz,
\begin{talign}  
&\Lip(\score(x) k(x, \cdot) + \grad_x k(x, \cdot))
    \\
    &\leq 
    \twonorm{s_p(x) - s_p({x^\star})} \twonorm{\nabla_z k(x, z)} + \opnorm{\nabla_z\nabla_x k(x, z)} 
    \\
    &= 
    \sup_{\twonorm{u} \leq 1} (\twonorm{s_p(x) - s_p({x^\star})}|\nabla_z k(x, z)^\top u|) + \opnorm{\nabla_z\nabla_x k(x, z)} 
    \\
    &\leq  
    L\twonorm{x - {x^\star}}\twonorm{\nabla_{z}k(x, z)} + \opnorm{\nabla_z\nabla_x k(x, z)} 
    \\
    &\leq 
    \max(\kappa_1^2 L, \kappa_1^2)(1 + \twonorm{x-\xstar}). 
\end{talign}
Letting $c_1 = \max(\kappa_1^2 L, \kappa_1^2)$ and taking supremum over $z$ proves the first pseudo-Lipschitzness condition. 
Similarly, we have
\begin{talign}
    &\Lip(s_p k(\cdot, z) + \grad_x k(\cdot, z)) \\
    &\leq \sup_{x\in\reals^d} \Lip(s_p)|k(x, z)| + \twonorm{s_p(x) - s_p({x^\star})}\twonorm{\nabla_x k(x, z)} + \opnorm{\nabla_x^2 k(x, z)} \\
    &\leq \kappa_1^2 L + \sup_{x\in\reals^d} L\twonorm{x - {x^\star}}\twonorm{\nabla_x k(x, z)} + \kappa_1^2, \label{eq:pseudo-lip-proof-2} 
\end{talign}
where we used the Lipschitzness of $s_p$ from \cref{asp:score} and $|k(x, z)| \leq \kappa_1^2, \opnorm{\nabla^2_x k(x, z)} \leq \kappa_1^2$ from \cref{asp:kernel}. 
Now we consider two cases separately: when $\twonorm{x - z} \geq 1$ and $\twonorm{x - z} < 1$. 
\begin{itemize}[leftmargin=*]
\item Case 1: $\twonorm{x - z} \geq 1$. Recall that there exists $\gamma > 0$ such that $\twonorm{\nabla_x k(x, z)} \leq \gamma/\twonorm{x -  z}$ by \cref{asp:kernel-decay}. Then, using this together with the triangle inequality, we have
    \begin{talign}
    \twonorm{x - {x^\star}}\twonorm{\nabla_x k(x, z)} \leq \gamma\frac{\twonorm{x-z} + \twonorm{z - {x^\star}}}{\twonorm{x -  z}} \leq \gamma(1 + \twonorm{z - {x^\star}}).  \label{eq:case1}
    \end{talign}
\item Case 2: $\twonorm{x - z} < 1$. Then, using $\twonorm{\nabla_x k(x, z)} \leq \kappa_1^2$ from \cref{asp:kernel} and by the triangle inequality, we have
\begin{talign}
    \twonorm{x - {x^\star}}\twonorm{\nabla_x k(x, z)} 
    &\leq \kappa_1^2(\twonorm{x - z} + \twonorm{z - {x^\star}}) < \kappa_1^2(1 + \twonorm{z - {x^\star}}). \label{eq:case2}
\end{talign}
\end{itemize}
Combining \cref{eq:case1,eq:case2} and using the triangle inequality, we get
\begin{talign}
    \twonorm{x - {x^\star}}\twonorm{\nabla_x k(x, z)} &
    \leq \max(\gamma, \kappa_1^2) (1 + \twonorm{z - {x^\star}}).
    \label{eq:merge}
\end{talign}
Plugging \cref{eq:merge} back into \cref{eq:pseudo-lip-proof-2}, we can show the second pseudo-Lipschitzness condition holds for $c_2 = \max(\kappa_1^2(L+1)+L \max(\gamma, \kappa_1^2), L \max(\gamma, \kappa_1^2)) = \kappa_1^2(L+1)+L \max(\gamma, \kappa_1^2)$. %

Now we have proved that both pseudo-Lipschitzness preconditions of \cref{lem:finite-step} hold under our \cref{asp:score,asp:kernel,asp:kernel-decay}. 
By repeated application of \cref{lem:finite-step} and the inequality $(1+x) \leq e^x$, we have 
\begin{align} 
\wass(\Qn[r+1], \Qinf[r+1]) 
    &=
\wass(\svgd[\eps_r](\Qn[r]), \svgd[\eps_r](\Qinf[r])) 
    \leq 
(1+\eps_r D_r) W(\Qn[r], \Qinf[r]) \\
    &\leq
\wass(\Qn[0], \Qinf[0]) \noprod_{s=0}^r(1+\eps_s D_s) 
    \leq 
\wass(\Qn[0], \Qinf[0]) \exp(\nosum_{s=0}^r \eps_s D_s)
\label{eq:w-contraction-1}
\end{align}
for $D_s = c_1 (1 + \mom[{\Qn[s]},\xstar]) + c_2 (1 + \mom[{\Qinf[s]},\xstar])$. 

Using the result from \Cref{lem:moment}, we have
\begin{talign}
    D_{s+1} &\leq A +  B \exp(Cb_s)
\end{talign}
for $A = (c_1 + c_2)(1 + \mom)$, $B=c_1 m_{\Qn[0], P} + c_2 m_{\Qinf[0], P}$, and $C = \kappa_1^2 (3L + d)$.
Therefore
\begin{talign}
\sum_{s=0}^r \eps_s D_s
    \leq
\max_{0\leq s\leq r} D_s \sum_{s=0}^r \eps_s 
    \leq
b_r (A + B \exp(Cb_{r-1})) 
    \leq
b_r (A + B \exp(Cb_{r})).
\end{talign}
Plugging this back into \cref{eq:w-contraction-1} proves the result.

\section{\pref{Lemma}{lem:moment}}\label{proof-lem:moment}
From \cref{asp:kernel} we know $|k(y, x)| \leq \kappa_1^2$ and $\norm{\nabla^2_{y} k(y,x)}_{op} \leq \kappa_1^2$. 
The latter implies
\begin{talign}
    \twonorm{\nabla_y k(y, x) - \nabla_z k(z, x)} \leq \kappa_1^2 \twonorm{y - z}. 
\end{talign}
Recall that $s_p$ is Lipschitz and satisfies $\E_P[s_p(\cdot)] = 0$ by \cref{asp:score}. 
Let $\mu$ be any probability measure. 
Using the above results, Jensen's inequality and the fact that $\E_{Z\sim P}[\vlangarg{k(\cdot, x)}{Z}] = 0$, we have
\begin{talign}
    &\twonorm{\svgdt(x) - x} \leq \eps\twonorm{\E_{X\sim \mu} [\vlangarg{k(\cdot, x)}{X}] } \\
    &= \eps\twonorm{\E_{X\sim \mu} [\vlangarg{k(\cdot, x)}{X}] - \E_{Z\sim P} [\vlangarg{k(\cdot, x)}{Z}] } \\
    &= \eps\Vert\E_{(X,Z) \sim \mu\otimes P}[k(Z, x)(s_p(X) - s_p(Z)) + (k(X, x) - k(Z, x))(s_p(X) - \E_P[s_p(\cdot)]) \\
    &\quad + (\nabla_X k(X, x) - \nabla_Z k(Z, x))]  \Vert_2 \\
    &\leq \eps \E_{(X,Z)\sim \mu\otimes P} [|k(Z, x)|\twonorm{s_p(X) - s_p(Z)} + (|k(X, x)| + |k(Z, x)|)\twonorm{s_p(X) - \E_{Y\sim P}[s_p(Y)]} \\
    &\quad + \twonorm{\nabla_X k(X, x) - \nabla_Z k(Z, x)} ] \\
    &\leq \eps \E_{(X,Z)\sim\mu\otimes P} [\kappa_1^2 (L+1) \twonorm{X - Z}] + \eps\cdot 2\kappa_1^2 L \E_{(X,Y)\sim\mu\otimes P}[\twonorm{X - Y}]  \\
    &= \eps \kappa_1^2 (3L + 1) \E_{(X,Z)\sim\mu\otimes P}[\twonorm{X - Z}] \\
    &= \eps C m_{\mu, P}. 
    \label{eq:svgd-t-diff}
\end{talign}
The last step used the definitions $m_{\mu, P} \defeq \E_{(X,Z)\sim\mu\otimes P}[\twonorm{X - Z}]$ 
and $C = \kappa_1^2(3L + 1)$. 
Then, applying the triangle inequality and \cref{eq:svgd-t-diff}, we have
\begin{talign}
    m_{\mu_{r+1}, P} 
    &= \E_{(X,Z)\sim\mu_{r+1}\otimes P}[\twonorm{X - Z}] %
    = \E_{(X,Z)\sim\mu_r\otimes P}[\twonorm{\svgdt[\mu_r, \eps_r](X) - Z}] \\
    &\leq \E_{(X,Z)\sim\mu_r\otimes P}[\twonorm{\svgdt[\mu_r, \eps_r](X) - X} + \twonorm{X - Z}] %
    \leq  (1 + \eps_r C) m_{\mu_r, P}, \label{eq:first-m-recur} \\
    M_{\mu_{r+1}, P} 
    &= \E_{(X,Z)\sim\mu_{r+1}\otimes P}[\twonorm{X - Z}^2] %
    = \E_{(X,Z)\sim\mu_r\otimes P}[\twonorm{\svgdt[\mu_r, \eps_r](X) - Z}^2]  \\
    &\leq \E_{(X,Z)\sim\mu_r\otimes P}[\twonorm{\svgdt[\mu_r, \eps_r](X) - X}^2  + 2\twonorm{\svgdt[\mu_r, \eps_r](X) - X}\twonorm{X - Z} + \twonorm{X - Z}^2] \\
    &\leq (\eps_r^2 C^2 + 2\eps_r C) m_{\mu_r, P}^2 + M_{\mu_r, P} %
    \leq (1 + 2\eps_r C + \eps_r^2 C^2)  M_{\mu_r, P} \\
    &= (1 + \eps_r C)^2 M_{\mu_r, P}, \label{eq:second-m-recur}
\end{talign}
where the second last step used Jensen's inequality $m_{\mu_r, P}^2 \leq M_{\mu_r, P}$. 
Then, we repeatedly apply \cref{eq:first-m-recur} and \cref{eq:second-m-recur} together with the triangle inequality and the bound $1 + x \leq e^x$ to get
\begin{talign}
    M_{\mu_r, P} &\leq M_{\mu_0, P} \prod_{s=0}^{r-1} (1 + \eps_s C)^2 \leq M_{\mu_0, P} \exp(2C \sum_{s=0}^{r-1} \eps_s) \leq M_{\mu_0, P} \exp(2C b_{r-1}) \stext{and}\\
    m_{\mu_r,\xstar} - \mom &
    \leq  m_{\mu_r, P} 
\leq m_{\mu_0, P} \prod_{s=0}^{r-1} (1 + \eps_s C) 
\leq \mom[\mu_0, P] \exp(C b_{r-1}).
\end{talign}

\section{\pref{Lemma}{lem:stein-mmd-wass}}\label{proof-lem:stein-mmd-wass}
Our proof generalizes that of \citet[Lem.~18]{gorham2017measuring}. 
Consider any $g \in \mathcal{H}^d$ satisfying $\norm{g}_{\cH^d}^2 \defeq \sum_{i=1}^d \norm{g_i}^2_\cH \leq 1$. 
From \Cref{asp:kernel} we know
\begin{talign}
\twonorm{g(x)}^2 &\leq k(x, x)\sum_{i=1}^d \norm{g_i}_\cH^2  \leq \kappa_1^2, \label{eq:g-bounded}\\
\norm{\nabla g(x)}_{op}^2 &\leq \norm{\nabla g(x)}_F^2 = \sum_{i=1}^d\sum_{j=1}^d |\nabla_{x_i} g_j(x)|^2 \leq \norm{g}_{\cH^d}^2 \mathrm{tr}(\nabla_y\nabla_x k(x,y)|_{y=x})  \\
&\leq d \opnorm{\nabla_y\nabla_x k(x,y)|_{y=x}} \leq \kappa_1^2 d, \stext{and} \label{eq:g-lipschitz}\\
\twonorm{\nabla (\gradd g(x))}^2 &= \sum_{i=1}^d \left(\sum_{j=1}^d \nabla_{x_i}\nabla_{x_j} g_j(x)\right)^2 \leq d\sum_{i=1}^d \sum_{j=1}^d |\nabla_{x_i}\nabla_{x_j} g_j(x)|^2 \\
&\leq d\sum_{i=1}^d \sum_{j=1}^d \norm{g_j}_{\cH}^2 (\nabla_{y_i}\nabla_{y_j}\nabla_{x_i}\nabla_{x_j} k(x, y)|_{y=x}) \leq  \kappa_2^2 d^2, 
\end{talign}
Suppose $X,Y,Z$ are distributed so that $(X,Y)$ is a $1$-Wasserstein optimal coupling of $(\mu,\nu)$ and $Z$ is independent of $(X,Y)$.
Since $s_p$ is $L$-Lipschitz with $\E_{P}[s_p] = 0$ (\cref{asp:score}), $g$ is bounded \cref{eq:g-bounded}, and  $g$ and $\gradd g$ are Lipschitz \cref{eq:g-lipschitz}, repeated use of Cauchy-Schwarz gives
\begin{talign}
&\E_{\mu}[\lang{g}] - \E_{\nu}[\lang{g}]
    \\
    &=
\E[\gradd g(X) - \gradd g(Y)]
    +
\E[\inner{\score(X) - \score(Y)}{g(X)}]
    + 
\E[\inner{\score(Y)-\score(Z)}{g(X) - g(Y)}]
    \\
    &\leq
(\kappa_2 d +  \kappa_1 L) \wass(\mu,\nu) 
    +
L \E[\twonorm{Y-Z}
\min(2\kappa_1, \kappa_1\sqrt{d}\twonorm{X-Y})].
\end{talign}
Since our choice of $g$ was arbitrary, the first advertised result now follows from the definition of KSD (\cref{def:ksd}).
The second claim then follows from Cauchy-Schwarz and the inequality $\min(a,b)^2\leq ab$ for $a,b\geq0$, since 
\begin{talign}
&\E[\twonorm{Y-Z}\min(2\kappa_1, \kappa_1\sqrt{d}\twonorm{X-Y})]  
    \leq
M_{\nu,P}^{1/2}\,\E[\min(2\kappa_1, \kappa_1\sqrt{d}\twonorm{X-Y})^2]^{1/2}  \\
    &\leq
\sqrt{2M_{\nu,P}}\kappa_1 d^{1/4}\E[\twonorm{X-Y}]^{1/2}
    = \sqrt{2M_{\nu,P}\wass(\mu,\nu)}\kappa_1 d^{1/4}.
\end{talign}

\section{Conclusions and Limitations}
In summary, we have proved the first unified convergence bound and rate for finite-particle SVGD. 
In particular, our results show that with a suitably chosen step size sequence, SVGD with $n$-particles drives the KSD to zero at an order $1/\sqrt{\log\log(n)}$ rate.
The assumptions we have made on the target and kernel are mild and strictly weaker than those used in prior work to establish KSD weak convergence control \citep{gorham2017measuring,chen2018stein,HugginsMa2018,barp2022targeted}. 
However, we suspect that, with additional effort, the Lipschitz score assumption (\cref{asp:score}) can be relaxed to accommodate pseudo-Lipschitz scores as in \citet{Erdogdu2018} or weakly-smooth scores as in \citet{sun2022convergence}.  
A second limitation of this work is that the obtained rate of convergence is quite slow. However, we hope that %
this initial recipe for explicit, non-asymptotic convergence will serve as both a template and a catalyst for the field to develop refined upper and lower bounds for SVGD error.
To this end, we leave the reader with several open challenges.
First, can one establish a non-trivial minimax lower bound for the convergence of SVGD?  
Second, can one identify which types of target distributions lead to worst-case convergence behavior for SVGD?
Finally, can one identify commonly met assumptions on the target distribution and kernel under which the guaranteed convergence rate of SVGD can be significantly improved?
Promising follow-up work has already begun investigating speed-ups obtainable by focusing on the convergence of a finite set of moments \citep{liu2023towards} or by modifying the SVGD algorithm \citep{das2023provably}.

{
\small

\bibliography{ref}
\bibliographystyle{plainnat}
}

\newpage
\appendix

\section{Kernel Assumptions}
\label{app:kernel-assump}

To show that \cref{asp:kernel,asp:kernel-decay} are met by the most commonly used SVGD kernels with constants independent of dimension, we begin by bounding the derivatives of any radial kernel of the form $k(x, y) = \phi(\twonorm{x - y}^2/2)$ with $\phi : \reals \to \reals$ four times differentiable.
By the reproducing property and Cauchy-Schwarz we have
\begin{talign}
    |k(x, y)| &= |\inner{k(x, \cdot)}{k(y, \cdot)}_\cH| \leq \norm{k(x, \cdot)}_\cH\norm{k(y, \cdot)}_\cH = \sqrt{k(x, x)}\sqrt{k(y, y)} = \phi(0), \\
    \twonorm{\nabla_x k(x, y)} &= |\phi'(\twonorm{x - y}^2/2)|\twonorm{x - y}, \\
    \opnorm{\nabla_y\nabla_x k(x, y)} &= \opnorm{-\phi''(\twonorm{x - y}^2/2)(x - y)(x - y)^\top - \phi'(\twonorm{x - y}^2/2)I} \\
    &\leq |\phi''(\twonorm{x - y}^2/2)|\twonorm{x - y}^2 + |\phi'(\twonorm{x - y}^2/2)|, \qtext{and}\\
    \opnorm{\nabla_x^2 k(x, y)} &= \opnorm{\phi''(\twonorm{x - y}^2/2)(x - y)(x - y)^\top + \phi'(\twonorm{x - y}^2/2)I} \\
    &\leq |\phi''(\twonorm{x - y}^2/2)|\twonorm{x - y}^2 + |\phi'(\twonorm{x - y}^2/2)|. 
\end{talign}
Similarly, the partial derivatives take the form
\begin{align}
\nabla_{x_j} k(x, y)
    &=
\phi'(\twonorm{x - y}^2/2)(x_j - y_j) \\
\nabla_{y_j}\nabla_{x_j} k(x, y)
    &=
-\phi'(\twonorm{x - y}^2/2)
    -
\phi''(\twonorm{x - y}^2/2)(y_j - x_j)^2 \\
\nabla_{x_i}\nabla_{y_j}\nabla_{x_j} k(x, y)
    &=
-\phi''(\twonorm{x - y}^2/2)(x_i - y_i)
    -
\phi'''(\twonorm{x - y}^2/2)(y_j - x_j)^2(x_i - y_i) \\
    &-
\indic{i = j} 2\phi''(\twonorm{x - y}^2/2)(x_i - y_i) \\
    &= 
-\phi''(\twonorm{x - y}^2/2)(x_i - y_i) \\
    &-
\indic{i\neq j} \phi'''(\twonorm{x - y}^2/2)(y_j - x_j)^2(x_i - y_i) \\
    &+
\indic{i = j} (\phi'''(\twonorm{x - y}^2/2)(y_i - x_i)^3 - 2\phi''(\twonorm{x - y}^2/2)(x_i - y_i))   \\
\nabla_{y_i}\nabla_{x_i}\nabla_{y_j}\nabla_{x_j} k(x, y)
    &= 
\phi'''(\twonorm{x - y}^2/2)(x_i - y_i)^2
    +
\phi''(\twonorm{x - y}^2/2) \\
    &+
\indic{i\neq j} (
\phi''''(\twonorm{x - y}^2/2)(y_j - x_j)^2(x_i - y_i)^2 \\
    &+
\phi'''(\twonorm{x - y}^2/2)(y_j - x_j)^2
) \\
    &+
\indic{i = j} (
    \phi''''(\twonorm{x - y}^2/2)(y_i - x_i)^4
        +
    5\phi'''(\twonorm{x - y}^2/2)(y_i - x_i)^2 \\
        &+
    2\phi''(\twonorm{x - y}^2/2)
) \\
    &=
\phi'''(\twonorm{x - y}^2/2)((x_i - y_i)^2+(x_j-y_j)^2)
    +
\phi''(\twonorm{x - y}^2/2) \\
    &+
\phi''''(\twonorm{x - y}^2/2)(y_j - x_j)^2(x_i - y_i)^2 \\
    &+
\indic{i = j} (
    4\phi'''(\twonorm{x - y}^2/2)(y_i - x_i)^2
        +
    2\phi''(\twonorm{x - y}^2/2)
)
\end{align}
so that both $|k(x, y)|$ and
\begin{align}
\nabla_{y_i}\nabla_{x_i}\nabla_{y_j}\nabla_{x_j} k(x, y)|_{y=x} 
    = 
\phi''(0)
    +
\indic{i = j}
    2\phi''(0)
\end{align} 
are bounded (\cref{asp:kernel}) by constants independent of dimension.

\citet{gorham2017measuring} popularized the use of IMQ kernels for SVGD, by establishing the convergence-determining properties of the associated KSD. 
The corresponding $\phi$ satisfies
\begin{talign}
    \phi(t) &= (c^2 + 2t)^\beta \qtext{for} c > 0 \qtext{and}\beta \in (-1,0), \\
    \phi'(t) &= 2\beta (c^2 + 2t)^{\beta-1},
    \qtext{and}
    \phi''(t) = 4\beta(\beta-1) (c^2 + 2t)^{\beta-2}.
\end{talign}
In this case, $\opnorm{\nabla_y\nabla_x k(x, y)}$ and $\opnorm{\nabla_x^2 k(x, y)}$ are bounded (\cref{asp:kernel}) by constants independent of dimension as 
\begin{talign}
|\phi'(\twonorm{x - y}^2/2)| &= -2\beta (c^2 + \twonorm{x - y}^2)^{\beta - 1} \\
&\leq -2\beta \min(c^{2\beta - 2}, \twonorm{x - y}^{2\beta - 2}) \leq -2\beta c^{2\beta - 2} \qtext{and}  \\
|\phi''(\twonorm{x - y}/2)|\twonorm{x - y}^2 &= 4\beta(\beta - 1) (c^2 + \twonorm{x - y}^2)^{\beta - 2}\twonorm{x - y}^2 \\
&\leq 4\beta(\beta - 1)(c^2 + \twonorm{x - y}^2)^{\beta - 1} \leq 4\beta (\beta - 1) c^{2\beta - 2}.
\end{talign}
For $\twonorm{\nabla_x k(x, y)}$, we consider two cases:
\begin{itemize}[leftmargin=*]
    \item When $\twonorm{x - y} \geq 1$, 
    \begin{talign}
        |\phi'(\twonorm{x - y}^2/2)|\twonorm{x - y}  
&\leq 
 -2\beta \twonorm{x - y}^{2\beta - 2}\twonorm{x - y} = -2\beta \twonorm{x - y}^{2\beta - 1} \\
 &\leq -2\beta/\twonorm{x-y} \leq -2\beta.
    \end{talign}
    \item When $\twonorm{x - y} < 1$, 
    $|\phi'(\twonorm{x - y}^2/2)|\twonorm{x - y}  
< |\phi'(\twonorm{x - y}^2/2)| \leq -2\beta c^{2\beta - 2} $.
\end{itemize}
Therefore, $\twonorm{\nabla_x k(x, y)}$ is also bounded (\cref{asp:kernel}) by constants independent of dimension, and \cref{asp:kernel-decay} holds with $\gamma = -2\beta$. 

The original SVGD paper \citep{liu2016stein} used Gaussian kernels in all experiments, and they remain perhaps the most common choice in the literature. In this case, $\phi$ satisfies
\begin{align}
    \phi(t) &= e^{-2\alpha t}
    \qtext{for} \alpha > 0, 
    \quad
    \phi'(t) = -2\alpha e^{-2\alpha t} = -2\alpha\phi(t),
    \qtext{and}
    \phi''(t) = 4\alpha^2\phi(t).
\end{align}
Using the inequality $x \leq e^{x-1}$, we find that
\begin{align}
|\phi'(\twonorm{x - y}^2/2)| 
    &= 2\alpha e^{-\alpha \twonorm{x - y}^2} 
    \leq \min(2\alpha, 2/(e \twonorm{x - y}^2)) \qtext{and} \\
|\phi''(\twonorm{x - y}/2)|\twonorm{x - y}^2 
    &= 4\alpha^2 e^{-\alpha\twonorm{x - y}^2} \twonorm{x -y}^2 \leq 4\alpha / e   
\end{align}
so that $\opnorm{\nabla_y\nabla_x k(x, y)}$, $\opnorm{\nabla_x^2 k(x, y)}$, and $\twonorm{\nabla_x k(x, y)}$ are bounded (\cref{asp:kernel}) by constants independent of dimension, and 
\cref{asp:kernel-decay} holds with $\gamma = 2/e$.

\section{\pref{Corollary}{col:rate}}\label{proof-col:rate}

We begin by establishing a lower bound on $b_{t-1}$.
Let 
\begin{talign}
b_{t-1}^{(1)} = \growth(\tw\sqrt{\phi(\tw)}, \tA, \beta_1)
\qtext{and}
b_{t-1}^{(2)} = \growth(\tw\,\phi(\tw), \tA + 2\tC, %
\beta_2)
\end{talign}
so that $b_{t-1} = \min(b_{t-1}^{(1)}, b_{t-1}^{(2)})$.
Since $\beta_1, \beta_2, \phi(\tw) \geq 1$, we have
\begin{talign}
\beta_1 
    &= \max(1,\frac{1}{\tC}\log (\frac{1}{\tB}(\log \frac{1}{\tw\sqrt{\phi(\tw)}} \!-\! \tA))) \\
    &\leq \max(1,\frac{1}{\tC}\log (\frac{1}{\tB}(\log \frac{1}{\tw\sqrt{\phi(\tw)}}))) \\
    &\leq \max(1,\frac{1}{\tC}\log (\frac{1}{\tB}(\log \frac{1}{\tw}))) 
    \qtext{and} \\
\beta_2
    &= \max(1,\frac{1}{\tC}\log (\frac{1}{\tB}(\log \frac{1}{\tw\phi(\tw)} \!-\! \tA-2\tC))) \\
    &\leq \max(1,\frac{1}{\tC}\log (\frac{1}{\tB}(\log \frac{1}{\tw\phi(\tw)}))) \\
    &\leq \max(1,\frac{1}{\tC}\log (\frac{1}{\tB}(\log \frac{1}{\tw}))).
\end{talign}
Hence, $\phi(\tw)\geq 1$ implies that
\begin{talign}
b_{t-1}^{(1)}
    &\geq \frac{1}{\tC}\log (\frac{1}{\tB}(\frac{\log \frac{1}{\tw\sqrt{\phi(\tw)}}}{\max(1,\frac{1}{\tC}\log (\frac{1}{\tB}(\log \frac{1}{\tw})))} \!-\! \tA)) \\
    &\geq \frac{1}{\tC}\log (\frac{1}{\tB}(\frac{\log \frac{1}{\tw\,\phi(\tw)}}{\max(1,\frac{1}{\tC}\log (\frac{1}{\tB}(\log \frac{1}{\tw})))} \!-\! \tA-2\tC))
    \qtext{and} \label{eq:bt-lower-bounds}\\
b_{t-1}^{(2)}
    &\geq \frac{1}{\tC}\log (\frac{1}{\tB}(\frac{\log \frac{1}{\tw\,\phi(\tw)}}{\max(1,\frac{1}{\tC}\log (\frac{1}{\tB}(\log \frac{1}{\tw})))} \!-\! \tA-2\tC)).
\end{talign}

We divide the remainder of our proof into four parts.  
First we prove each of the two cases in the generic KSD bound \cref{eq:tw-rate-cases} in \cref{sec:bt-case-zero,sec:bt-case-nonzero}.
Next we show in \cref{sec:generic-convergence-rate} that these two cases yield the generic convergence rate \cref{eq:tw-rate-O}.
Finally, we prove the high probability upper estimate \cref{eq:iid-estimate} for $w_{0,n}$  under \iid initialization in \cref{sec:iid-init}.

\hypersetup{bookmarksdepth=-1}
\subsection{Case $b_{t-1} = 0$} \label{sec:bt-case-zero}
\hypersetup{bookmarksdepth}

In this case, the error bound \cref{eq:tw-rate-cases} follows directly from \cref{thm:main-svgd}.

\hypersetup{bookmarksdepth=-1}
\subsection{Case $b_{t-1} > 0$} \label{sec:bt-case-nonzero}
\hypersetup{bookmarksdepth}

We first state and prove a useful lemma. 

\begin{lemma} \label{lem:solve-mono}
    Suppose $x = f(\beta)$ for a non-increasing function $f : \reals \to \reals$ and $\beta = \max(1, f(1))$. 
    Then $x \leq \beta$ and $x \leq f(x)$. 
\end{lemma}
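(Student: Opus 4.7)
The plan is to establish $x \leq f(x)$ in two short steps: first show $x \leq \beta$, and then invoke the non-increasingness of $f$ one more time to transport this through $f$.

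For the first step, I would note that $\beta \geq 1$ is automatic from the definition $\beta = \max(1, f(1))$. Since $f$ is non-increasing, $\beta \geq 1$ gives $f(\beta) \leq f(1)$. By definition of $\beta$, we also have $f(1) \leq \max(1, f(1)) = \beta$. Chaining these yields
\begin{talign}
x \;=\; f(\beta) \;\leq\; f(1) \;\leq\; \beta.
\end{talign}

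For the second step, having established $x \leq \beta$, applying non-increasingness of $f$ once more gives $f(x) \geq f(\beta) = x$, which is the desired conclusion.

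I do not expect a substantive obstacle here; the entire content of the lemma is that $\beta$ is chosen just large enough that the self-map $f$ cannot push it further up (i.e., $f(\beta) \leq \beta$), at which point monotonicity closes the loop. The only nuance worth checking is that the $\max$ in the definition of $\beta$ handles the case $f(1) \leq 1$ transparently, so no case analysis is required: whether $\beta = 1$ or $\beta = f(1)$, the chain $f(\beta) \leq f(1) \leq \beta$ remains valid.
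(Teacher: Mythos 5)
Your proof is correct and takes essentially the same approach as the paper's: establish $x \leq \beta$, then apply non-increasingness of $f$ once more to conclude $f(x) \geq f(\beta) = x$. The only difference is cosmetic—you avoid the paper's explicit case split on whether $\beta = 1$ or $\beta = f(1)$ by observing directly that the chain $x = f(\beta) \leq f(1) \leq \max(1,f(1)) = \beta$ holds uniformly, which is a slight streamlining.
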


\begin{proof}
    Because $f$ is non-increasing and $\beta \geq 1$, 
    $x = f(\beta) \leq f(1) \leq \beta$ . 
    Since $x\leq\beta$ and $f$ is non-increasing, we further have $f(x) \geq f(\beta) = x$ as advertised.
\end{proof}

Since $\growth$ is non-increasing in its third argument, \cref{lem:solve-mono} implies that $b^{(1)}_{t-1} \leq \beta_1$ and 
\begin{talign}
b^{(1)}_{t-1} 
\leq \growth(\tw\sqrt{\phi(\tw)}, \tA,  b_{t-1}^{(1)}).
\end{talign}
Rearranging the terms and noting that 
\begin{talign}\label{eq:tB-bound}
\tB < 
\frac{1}{\beta_1}\log \frac{1}{\tw\sqrt{\phi(\tw)}} \!-\! \tA
\leq
\frac{1}{b_{t-1}^{(1)}}\log \frac{1}{\tw\sqrt{\phi(\tw)}} \!-\! \tA
\end{talign}
since $b_{t-1}^{(1)} \geq b_{t-1} > 0$, we have
\begin{talign} \label{eq:rate-term-1}
    \tw\exp(b_{t-1}^{(1)} (\tA + \tB\exp(\tC b_{t-1}^{(1)}))) \leq \frac{1}{\sqrt{\phi(\tw)}}. 
\end{talign}
Similarly, we have $b^{(2)}_{t-1} \leq \growth(\tw\log\log\frac{1}{\tw}, \tA + 2\tC, %
b_{t-1}^{(2)})$ and
\begin{talign} \label{eq:rate-term-2}
    \sqrt{\tw}\exp(b_{t-1}^{(2)} (2\tC %
    + \tA + \tB\exp(\tC b_{t-1}^{(2)}))/2) \leq \frac{1}{\sqrt{\phi(\tw)}}. 
\end{talign}
Since $b_{t-1} = \min(b_{t-1}^{(1)}, b_{t-1}^{(2)})$, the inequalities \cref{eq:rate-term-1,eq:rate-term-2} are also satisfied when $b_{t}$ is substituted for $b_{t-1}^{(1)}$ and $b_{t-1}^{(2)}$. 
Since the error term $a_{t-1}$ \cref{eq:at}  is non-decreasing in each of $(w_{0,n},A,B,C)$, we have
\begin{talign}
a_{t-1} 
    &\leq (\kappa_1 L + \kappa_2 d 
    +  \kappa_1 d^{1/4} L \sqrt{2M_{\Qinf[0], P} })/{\sqrt{\phi(\tw)}}.
\end{talign}
Since $b_{t-1} = \min(b_{t-1}^{(1)}, b_{t-1}^{(2)})$, the claim \cref{eq:tw-rate-cases} follows from this estimate, the lower bounds \cref{eq:bt-lower-bounds}, and \cref{thm:main-svgd}.

\subsection{Generic convergence rate} \label{sec:generic-convergence-rate}
The generic convergence rate \cref{eq:tw-rate-O} holds as, by the lower bounds \cref{eq:bt-lower-bounds}, $b_{t-1} = \min(b_{t-1}^{(1)}, b_{t-1}^{(2)}) > 0$ whenever 
\begin{talign}
e^{-(\tB  + \tA + 2\tC)} &> \tw \phi(\tw)  \qtext{and}
\tB^{(\tB  + \tA + 2\tC)/\tC} &> \tw\phi(\tw)(\log(1/\tw))^{(\tB  + \tA + 2\tC)/\tC},
\end{talign}
a condition which occurs whenever $\tw$ is sufficiently small since the right-hand side of each inequality converges to zero as $\tw\to 0$.
\subsection{Initializing with \iid particles} \label{sec:iid-init}
We begin by restating an expected Wasserstein bound due to \citet{lei2020convergence}.
\begin{lemma}[{\citet[Thm.~3.1]{lei2020convergence}}] \label{lem:emp-wass}
Suppose $\Qn[0] = \frac{1}{n}\sum_{i=1}^n \delta_{x_i}$ for  $x_i\distiid\Qinf[0]$ with $M_{\Qinf[0]} \defeq \E_{\Qinf[0]}[\twonorm{\cdot}^2]<\infty$.
Then, for a universal constant $c > 0$,
\begin{align}
\E[\wass(\Qn[0], \Qinf[0])] \leq c M_{\Qinf[0]} \frac{\log(n)^{\indic{d=2}}}{n^{{1/}{(2\vee d)}}}.
\end{align}
\end{lemma}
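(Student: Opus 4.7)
The plan is to bound $\E[\wass(\Qn[0], \Qinf[0])]$ via a multi-scale (dyadic) decomposition of space combined with a moment-controlled truncation. The key observation is Kantorovich–Rubinstein duality: $\wass(\Qn[0],\Qinf[0]) = \sup_{f : \lipnorm{f}\le 1}\int f\, d(\Qn[0]-\Qinf[0])$. Any $1$-Lipschitz $f$ can be approximated at resolution $2^{-k}$ by a function that is constant on each cube of a dyadic partition $\mathcal{D}_k$ of $\reals^d$ with cubes of diameter $\lesssim 2^{-k}$. Telescoping over scales yields the Fournier–Guillin-style estimate
\begin{talign*}
\wass(\Qn[0],\Qinf[0]) \lesssim \sum_{k\ge 0} 2^{-k} \sum_{Q\in \mathcal{D}_k} |\Qn[0](Q) - \Qinf[0](Q)|,
\end{talign*}
so the problem reduces to controlling the expected $\ell^1$-discrepancy of a multinomial count vector against its mean at each scale.

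First I would split $\reals^d$ into a ball $B_R$ and its complement. On $B_R^c$ the second moment hypothesis gives $\int_{B_R^c} \twonorm{x}\, d\Qinf[0](x) \le M_{\Qinf[0]}/R$ by Markov, and a symmetric bound holds in expectation for $\Qn[0]$; transporting all mass in $B_R^c$ to the origin therefore contributes at most $O(M_{\Qinf[0]}/R)$ to $\E[W_1]$. Inside $B_R$, at scale $k$ there are $N_k \asymp (R\,2^k)^d$ nonempty cubes, and Cauchy–Schwarz applied to the multinomial variance gives
\begin{talign*}
\E\Bigl[\sum_{Q\in \mathcal{D}_k,\, Q\subset B_R} |\Qn[0](Q)-\Qinf[0](Q)|\Bigr] \;\le\; \sqrt{N_k/n} \;\wedge\; 2.
\end{talign*}

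Next I would sum $2^{-k}\sqrt{N_k/n} = (R^{d/2}/\sqrt{n})\, 2^{k(d/2-1)}$ over $k$, truncating the sum at the scale $k^\star$ where the bound saturates at $2$ (i.e., $2^{k^\star} \asymp n^{1/d}/R$). For $d\ge 3$ the geometric sum is dominated by its largest term, yielding $R\cdot n^{-1/d}$; for $d=2$ all scales contribute equally up to $k^\star$, producing the extra $\log n$ factor; for $d=1$ one instead uses the CLT-type bound $\E|\Qn[0](Q)-\Qinf[0](Q)|\lesssim n^{-1/2}$ summed over disjoint intervals, which gives $R\cdot n^{-1/2}$. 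Choosing $R$ proportional to $M_{\Qinf[0]}$ (or, in the tail-truncation step, balancing $R$ so that the truncation error matches the in-ball error) produces the advertised bound with $M_{\Qinf[0]}$ as the overall prefactor.

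The main obstacle is the coordination between the truncation radius $R$ and the dyadic sum: to obtain a prefactor that is genuinely linear in $M_{\Qinf[0]}$ (rather than, say, $M_{\Qinf[0]}^{d/2}$), one must truncate in a moment-adaptive way rather than at a fixed ball. Concretely, I expect to use a layer-cake/annular decomposition where the $j$-th annulus has radius $2^j$ and mass $\le M_{\Qinf[0]}/4^j$, apply the dyadic bound annulus by annulus, and then sum over $j$; the $d=2$ boundary case requires an additional care to ensure the logarithmic factor does not pick up an extra $\log M_{\Qinf[0]}$. Once the scale/radius balance is done correctly, the bound $c\,M_{\Qinf[0]}\,\log(n)^{\indic{d=2}}/n^{1/(2\vee d)}$ falls out.
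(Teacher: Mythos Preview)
The paper does not prove this lemma at all: it is simply a restatement of \citet[Thm.~3.1]{lei2020convergence}, invoked as a black box and immediately combined with Markov's inequality to obtain the high-probability estimate \cref{eq:iid-estimate}. There is no proof to compare against.

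Your sketch is the standard Fournier--Guillin route and is broadly sound as an outline for how such empirical-Wasserstein bounds are established, so you are not on the wrong track substantively. Two remarks, though. First, you are doing far more work than the paper asks for here; in context the right ``proof'' is a one-line citation. Second, your own flagged obstacle is real: the classical dyadic argument naturally produces a prefactor scaling like $M_{\Qinf[0]}^{1/2}$ (the quantity with the correct units), and pushing the annular decomposition to yield a clean linear-in-$M_{\Qinf[0]}$ constant without picking up extraneous factors in the $d=2$ case is exactly the refinement that \citet{lei2020convergence} carries out. If you want to reproduce that, you should consult Lei's paper directly rather than redo the Fournier--Guillin calculation from scratch; otherwise, for the purposes of this paper, citing the result suffices.
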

Together, \cref{lem:emp-wass} and Markov's inequality imply that
\begin{talign}
\wass(\Qn[0], \Qinf[0]) 
    \leq 
\E[\wass(\Qn[0], \Qinf[0])]/(c\delta)
    \leq
M_{\Qinf[0]} \frac{\log(n)^{\indic{d=2}}}{n^{{1/}{(2\vee d)}}}/\delta
\end{talign}
with probability at least $1-c\delta$, proving the high probability upper estimate \cref{eq:iid-estimate}.

\end{document}